\documentclass{article}

\usepackage[preprint]{neurips_2026}

\usepackage[utf8]{inputenc} % allow utf-8 input
\usepackage[T1]{fontenc}    % use 8-bit T1 fonts
\usepackage{hyperref}       % hyperlinks
\usepackage{url}            % simple URL typesetting
\usepackage{booktabs}       % professional-quality tables
\usepackage{amsfonts}       % blackboard math symbols
\usepackage{nicefrac}       % compact symbols for 1/2, etc.
\usepackage{microtype}      % microtypography
\usepackage{xcolor}         % colors

\title{Fast and Stable Triangular Inversion for Delta-Rule Linear Transformers}

\author{%
  Aleksandros Sobczyk
  \thanks{Contact: \quad\texttt{aleksandros.sobczyk@h-partners.com} \quad \texttt{gioele.gottardo2@h-partners.com}\\ 
  \texttt{christos.konstantinos.matzoros@h-partners.com} 
  \quad \texttt{mirko.de.vita@h-partners.com} \\
  \texttt{filip.skogh@h-partners.com}\quad 
  \texttt{anastasios.zouzias@huawei.com}\quad 
  \texttt{zhuangjiawei@huawei.com}
  }
  \And
  Gioele Gottardo \\
  \And
  Christos K. Matzoros \\
  \And
  Mirko De Vita \\
  \And
  Filip Skogh \\
  \And
  Anastasios Zouzias \\
  \\
  Computing Systems Lab \\
  Huawei Technologies\\
  Switzerland
  \And
  Jiawei Zhuang \\
}

\usepackage{amsmath}
\usepackage{amssymb}
\usepackage{amsthm}
\usepackage{subcaption}
 \usepackage{multirow}

\newcommand{\sm}{\left(\begin{smallmatrix}}
\newcommand{\esm}{\end{smallmatrix}\right)}

\hypersetup{
     colorlinks=true,
     linkcolor=blue,
     citecolor=blue,
     urlcolor=blue,
}

\usepackage{bm} % fixes boldsymbol
\usepackage{amsmath} % Comment out for acmart
\usepackage{amssymb} % Comment out for acmart
\usepackage{amsthm} % Comment out for acmart
\usepackage{algorithm}
\usepackage[noend]{algpseudocode}
\usepackage{braket}

\algnewcommand\algorithmicinput{\textbf{Input:}}
\algnewcommand\Input{\item[\algorithmicinput]}
\algnewcommand\algorithmicoutput{\textbf{Output:}}
\algnewcommand\Output{\item[\algorithmicoutput]}
\algnewcommand\algorithmicalgorithm{\textbf{Algorithm:}}
\algnewcommand\Algorithm{\item[\algorithmicalgorithm]}

\newtheorem{theorem}{Theorem}[section]
\newtheorem{lemma}{Lemma}[section]
\newtheorem{proposition}{Proposition}[section]
\newtheorem{corollary}{Corollary}[section]

\newtheorem{definition}{Definition}[section]

\DeclareMathOperator{\diag}{diag}

\DeclareMathOperator{\poly}{poly}
\DeclareMathOperator{\polylog}{polylog}

\newcommand\vecb{\boldsymbol{\mathrm{b}}}

\newcommand\vece{\boldsymbol{\mathrm{e}}}

\newcommand\veck{\boldsymbol{\mathrm{k}}}

\newcommand\veco{\boldsymbol{\mathrm{o}}}

\newcommand\vecq{\boldsymbol{\mathrm{q}}}

\newcommand\vecv{\boldsymbol{\mathrm{v}}}

\newcommand\vecx{\boldsymbol{\mathrm{x}}}

\newcommand\matA{\boldsymbol{\mathrm{A}}}
\newcommand\matB{\boldsymbol{\mathrm{B}}}

\newcommand\matD{\boldsymbol{\mathrm{D}}}
\newcommand\matE{\boldsymbol{\mathrm{E}}}
\newcommand\matF{\boldsymbol{\mathrm{F}}}
\newcommand\matG{\boldsymbol{\mathrm{G}}}

\newcommand\matI{\boldsymbol{\mathrm{I}}}

\newcommand\matK{\boldsymbol{\mathrm{K}}}
\newcommand\matL{\boldsymbol{\mathrm{L}}}
\newcommand\matM{\boldsymbol{\mathrm{M}}}

\newcommand\matO{\boldsymbol{\mathrm{O}}}

\newcommand\matQ{\boldsymbol{\mathrm{Q}}}
\newcommand\matR{\boldsymbol{\mathrm{R}}}

\newcommand\matV{\boldsymbol{\mathrm{V}}}
\newcommand\matW{\boldsymbol{\mathrm{W}}}
\newcommand\matX{\boldsymbol{\mathrm{X}}}
\newcommand\matY{\boldsymbol{\mathrm{Y}}}

\newcommand\matGamma{\boldsymbol{\mathrm{\Gamma}}}

\newcommand\matAtilde{\widetilde{\boldsymbol{\mathrm{A}}}}

\newcommand\matCtilde{\widetilde{\boldsymbol{\mathrm{C}}}}

\newcommand\matXtilde{\widetilde{\boldsymbol{\mathrm{X}}}}
\newcommand\matYtilde{\widetilde{\boldsymbol{\mathrm{Y}}}}
\newcommand\matZtilde{\widetilde{\boldsymbol{\mathrm{Z}}}}

\newcommand\matAhat{\widehat{\boldsymbol{\mathrm{A}}}}

\newcommand{\fl}{\boldsymbol{\mathsf{fl}}}
\newcommand{\umach}{\textbf{\textup{u}}}
\newcommand\RR{\mathbb{R}}

\newcommand{\norm}[1]{\ensuremath{\left\|#1\right\|_2}}

\begin{document}

\maketitle

\begin{abstract}
  Linear attention has emerged as a cornerstone for efficient long-context architectures, as evidenced by its integration into state-of-the-art open-source models including Qwen3.5/3.6, Kimi Linear, and RWKV-7. Models that incorporate linear attention layers with the so-called Delta-Rule involve the inversion of triangular matrices as a core sub-routine. This operation often forms a performance bottleneck, and, due to its high-sensitivity to numerical errors, it can significantly deteriorate end-to-end model accuracy if it is not carefully implemented. This work provides a systematic analysis of both direct and iterative triangular inversion algorithms, targeting methods that are rich in matrix products, and, therefore, have the potential to efficiently utilize modern hardware. To that end, our analysis covers a broad spectrum of mathematical and practical aspects, with a heavy focus on numerical stability, computational complexity, and, ultimately, hardware efficiency and practical considerations. We provide a rigorous experimental evaluation to verify these properties in practical scenarios, and in low-precision floating-point representations, highlighting the strengths and limitations of each method. Performance benchmarks on NPUs reveal up to $4.3\times$ speed-up against the state-of-the-art implementations of SGLang for triangular matrix inversion, leading to significant performance improvements on the entire layer level, while maintaining full end-to-end model accuracy. 
\end{abstract}

\begin{figure}[htb]
    \centering
    \includegraphics[width=0.43\linewidth]{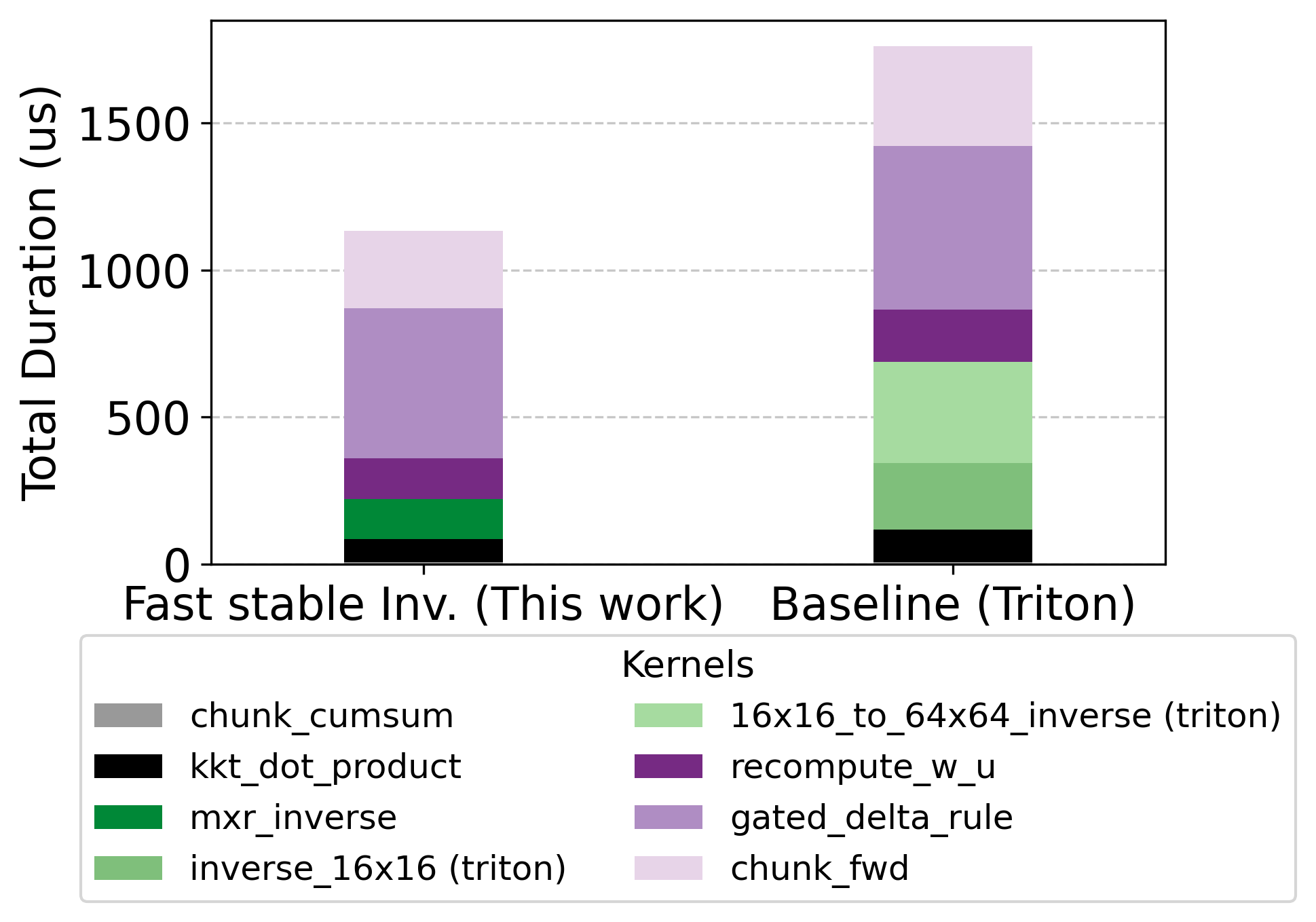}
    \quad
    \includegraphics[width=0.53\linewidth]{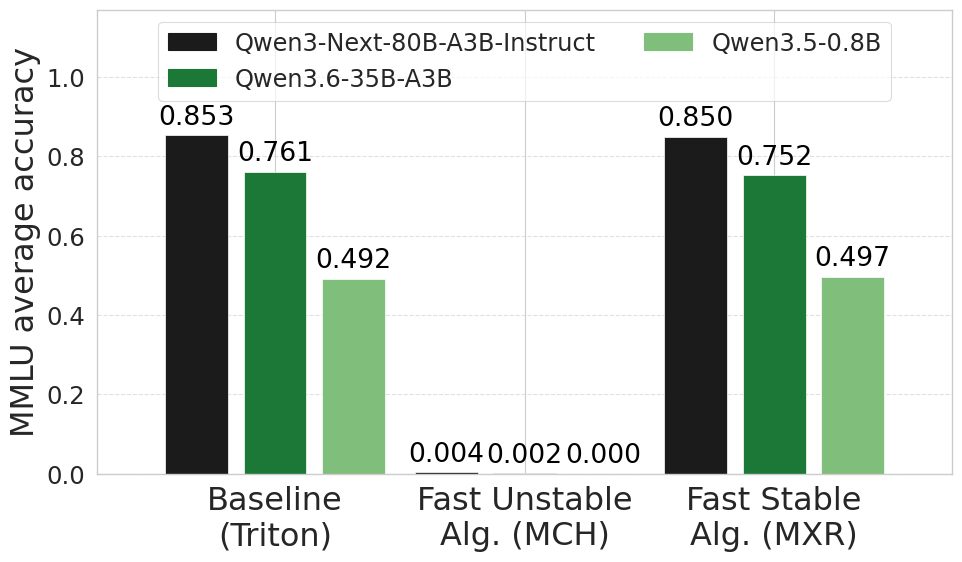}
    \caption{Left: Runtime breakdown of single GDN layer. Right: Accuracy drop in end-to-end benchmark evaluation due to inversion errors accumulation.}
    \label{fig:gdn_runtime_breakdown}
\end{figure}

\section{Introduction}
The attention mechanism \cite{vaswani2017attention} is one of the most important innovations of the last decade in deep learning, and lies at the core of Large Language Models (LLMs).
The quadratic time complexity and the linearly growing KV-cache of softmax-attention have become a key bottlenecks for scaling LLMs. Linear attention \cite{katharopoulos2020transformers} offers a way to reduce the computational complexity, and many different variants have been developed to close the performance gap between linear and standard attention \cite{dao2024transformers,kimi2025kimi,peng2025rwkv,yang2023gated}. Prominent approaches towards this direction include DeltaNets \cite{schlag2021linear,yang2024parallelizing} and Gated DeltaNet (GDNs) \cite{yang2025gated}, which introduce decay and gating factors. 

In brief, given an input sequence of length $L$, the corresponding query-key-value matrices $\matQ,\matK,\matV\in\mathbb{R}^{L\times d}$, where $d$ is the head dimension, and a causal mask (lower-triangular) matrix $\matM\in\mathbb{R}^{L\times L}$, in its simplest form linear attention can be defined in two equivalent ways:
%
%%
%%%
\begin{align}
    \matO&=(\matQ\matK^\top \odot\matM)\matV\quad \text{(matrix form)}, \label{eq:linear_attention_matrix_form}\\
    \veco_t&=\left(\sum_{i=1}^{t}\vecv_j\veck_j^\top \right)\vecq_t= \matV^\top_{1:t,:}\matK_{1:t,:}\vecq_t\quad \text{(linear recurrence)},
    \label{eq:linear_attention_recurrent_form}
\end{align}
%%%
%%
%
where $\odot$ denotes element-wise matrix product. The linear recurrence formulation requires significantly less memory than the matrix form, as it only requires a single vector to be stored at any time step. On the other hand, the matrix form lends itself for parallel processing. To achieve the ``best of both worlds'', and to optimize the computational capabilities of AI accelerators, a standard technique is to utilize \emph{chunk-wise} parallel processing~\cite{hua2022transformer, yang2024parallelizing}. In this case, the sequence is partitioned into $\lceil L/C\rceil$ ``chunks'' of (at most) length $C$ each. In practice, $C$ is typically one of $\{16,32,64,128\}$. 
Let us denote the corresponding ``chunks'' of the matrix $\matK$ as $\matK_{[k]}:=\matK_{kC:(k+1)C,:}\in\mathbb{R}^{C\times d}$.
After several non-trivial mathematical re-formulations (see e.g. \cite{yang2025gated}), the final output updates within each chunk require the computation of the inverse of a \emph{unit-diagonal lower triangular matrix} $\matA$ of size $C\times C$, which has the form:
%
%%
%%%
\begin{align}
    \matA_{[k]}^{-1} = \left(
        \matI - \phi\left(\matK_{[k]}, \matK^\top_{[k]}\right) \odot \matL^{-}
        \right)^{-1},\quad \matL^{-}_{i,j}:=\begin{cases}
    0, \text{ if } i\leq j,\\
    1, \text{ otherwise},
\end{cases}
\label{equation:definition_of_lower_triangular_matrices}
\end{align}
%%%
%%
%
where $\phi\left(\cdot, \cdot\right)\in{\RR^{C\times C}}$ is a  function that depends on the specific model architecture,  typically involving element-wise functions of the elements of $\matK$, such as scaling or exponentiation, $\matI$ is the identity matrix, and $\matL^{-}$ is a strictly lower triangular all-ones (mask) matrix of size $C$. We provide a description of the different $\phi(\cdot,\cdot)$ functions that are used by various models in Table \ref{table:inverse_functions_of_different_models}.
%

%
%%
%%%
\begin{minipage}{0.32\textwidth}
\setlength{\tabcolsep}{1.2pt}
\begin{table}[H]
    \centering
    \caption{Examples of functions $\phi(\cdot, \cdot)$ in existing models.}
    \footnotesize
    \renewcommand{\arraystretch}{1.1}
    \begin{tabular}{r l}
    \hline
    Model & $\phi\left(\cdot , \cdot \right)$ \\\hline\hline\noalign{\vskip 2pt}
        DeltaNet 
        \cite{schlag2021linear} & 
        $\diag(\beta)\cdot \matK\matK^\top$  
    \\
        DeltaFormer \cite{xu2025deltaformer}  & 
        $\exp(\matW\matK^{\top})$  
    \\
        GDN \cite{yang2025gated}&
        $\matK\matK^\top \odot \mathcal{A}$ 
    \\
        Comba \cite{hu2026improving}& 
        $\matK\matK^\top \odot \mathcal{A}^{i-1/j}$
    \\
        KDA \cite{kimi2025kimi}&
        $(\matK\odot \matGamma)(\matK \oslash \matGamma)^\top$
    \\\hline
    \end{tabular}
    \label{table:inverse_functions_of_different_models}
\end{table}
\end{minipage}
\hfill
\begin{minipage}{0.65\textwidth}
\setlength{\tabcolsep}{1.3pt}
\begin{table}[H]
    \centering
    \caption{Summary of triangular inversion methods.}
    \label{table:summary_of_methods}
    \footnotesize
    \begin{tabular}{l c l l l}
    \hline
    Method (Acronym) & Type & Stability & \# MatMuls & Refs.\\
    \hline\hline
    Vector Col. Sweep (VCS)  & Direct & stable (Lem.~\ref{lemma:stability_of_column_sweep}) & - & \cite{gallopoulos2016parallelism,sameh1977solving} \\
    Matrix Col. Sweep (MCS) & Direct & stable (Lem.~\ref{lemma:stability_of_column_sweep}) & $O(n)$ & \cite{gallopoulos2016parallelism,sameh1977solving} \\
    Bunch-Hopcroft (MBH) & Direct & log-stable  & $\approx 2\log(n)$ & \cite{bunch1974triangular,demmel2007fast} \\
    Cayley-Hamilton (MCH) & Direct & unstable & $2\lceil \log(\tfrac{n}{2})\rceil$ & \cite{codenotti2001role} \\
    Mixed Recursion (MXR) & Direct & log-stable & $\approx2\log(n)$ & Sec. \ref{section:mixed_recursive_algorithm} \\
    Newton-Schulz (NS) & Iterative & stable (conjectured) & $2\times N_{\text{iter}}$ & \cite{pan1985efficient,schulz1933iterative} \\
    Iterative Refinement (IR) & Iterative & stable (per iter) & $2\times N_{\text{iter}}$ & \cite{higham2002accuracy} \\\hline
    \end{tabular}
\end{table}
\end{minipage}
%%%
%%
%
%
%%
%%%
\subsection{Motivation and Contributions}
%%%
%%
%
In existing model implementations, the actual inversion of the individual chunks can quickly become a performance bottleneck. For example, in Figure \ref{fig:gdn_runtime_breakdown} (Left) it is shown that the triangular inversion part can take up to $\sim 40\%$ of the total layer device-runtime.

Alternative approaches for hardware-aware algorithms have been proposed \cite{zhong2025understanding} to better utilize matrix cores in AI accelerators and to surpass the performance bottlenecks. However, matrix cores are usually optimized for low-precision (half) arithmetic, and hence if these hardware-aware algorithms are not designed carefully, the errors introduced due to low-precision floating point arithmetic can easily lead to catastrophic accumulation of errors, often returning completely wrong, or even "NaN" solutions. This is clearly illustrated in Figure \ref{fig:gdn_runtime_breakdown} (Right), where we show how an unstable inversion method drastically reduces the reasoning capabilities of Qwen3.6, dropping the accuracy on MMLU from 75.6\% to 0.2\%! Evidently, model accuracy is highly correlated to the accuracy of the underlying inversion algorithm.

This is of course not surprising. Inverting triangular matrices, or, even more importantly,\footnote{Folklore states that, for many numerical analysts, inverting a matrix is considered a sin, since, in many cases, all that one needs is to solve a linear systems of equations, rather than computing the entire inverse.} solving triangular systems of linear equations, is a very thoroughly studied problem in numerical analysis, dating all the way back to the early works of Wilkinson \cite{wilkinson1961error}. One of the main reasons is that triangular factorizations (LU, Cholesky, QR, etc) established themselves early-on as the ``gold standard'' procedure to solve general (dense) linear systems of equations. As such, the stability properties and computational complexity are quite well-understood, and algorithm implementations should always take them into consideration and carefully handle numerical errors.

The aforementioned observations motivated us to revisit the rich bibliography on triangular matrix inversion algorithms, in the context of Linear Transformers. 
We focus on deriving algorithms based on the following guidelines:
\begin{description}
    \item[Matrix products and hardware efficiency:] In order to take advantage of matrix cores in modern AI accelerators, the underlying inversion algorithms should extensively use matrix products.
    \item[Numerical stability:] Fast algorithms are not useful if they return wrong results. To that end, numerical stability analysis is mandatory to ensure high-accuracy and meaningful results in real-applications. 
    \item[Focus on Linear Attention:] The specific properties of the matrices arising in Delta-Rule Linear Transformers should be exploited as much as possible to achieve the best performance, both from the mathematical analysis perspective, as well as practical performance.
\end{description}

Our main contribution is the thorough analysis of several types of direct and iterative triangular inversion algorithms, based on the aforementioned guidelines. A summary of the methods studied here and their properties is provided in Table \ref{table:summary_of_methods}.

By dedicating the analysis to the underlying types of matrices, we provide some novel mathematical properties that are also useful for practical applications. Stability properties are thoroughly tested experimentally for all algorithms, in three different precisions: float32, float16, and bfloat16. Interestingly, we show that even \emph{abysmally unstable} algorithms (Alg. \ref{alg:mch}, MCH), if used carefully, can be efficiently exploited as sub-routines inside very fast and stable algorithms (Alg. \ref{alg:mxr}, MXR). In terms of performance, the latter approach achieved up to 4.3x speedup on NPUs compared to the current state-of-the-art implementations for triangular inversion in SGLang. Links to the public repositiories that contain the source code are provided in Appendix \ref{appendix:source_code}

\subsection*{Acknowledgements}
We thank Efstratios Gallopoulos for helpful literature recommendations, and for suggesting the term ``abominable'' to characterize the condition number of random triangular matrices.

\section{Preliminaries}

%
%%
%%%
\subsection{Numerical stability}
%%%
%%
%
Fast algorithms are not useful if the returned results are not accurate. In this section, we recall a notion of numerical stability for matrix inversion (see e.g. \cite{croz1992stability,higham2002accuracy}), which allows us to argue about the robustness of different algorithms against numerical errors. It is especially important in low-precision scenarios, which are becoming more and more prominent in AI workloads. We assume a standard floating-point model of computation with $t$ bits for the significand, $p$ bits for the exponent, and one bit for the sign. The machine precision is equal to $\umach:=2^{-t}$. (see Appendix \ref{appendix:floating_point_model} for a detailed description).
%
%%
%%%
\begin{definition}
    Let $\matA$ be an invertible $n\times n$ triangular matrix. An algorithm $\mathcal{A}$ which computes an approximate inverse $\matAtilde ^{-1}$ is said to be $f$-stable if:
    \begin{align*}
        \norm{\matA^{-1} - \widetilde{\matA}^{-1}} \leq c_1 \cdot n^{c_2}  \cdot \kappa_2^{f}(\matA) \cdot \umach \cdot \norm{\matA^{-1}},
    \end{align*}
where $c_1$ and $c_2$ are global constants (independent of the matrix size $n$), and $\kappa_2(\matA) = \| \matA \|_2 \, \| \matA^{-1} \|_2 \geq 1$ is the spectral-norm condition number of $\matA$.\label{definition:numerically_stable}
\end{definition}
%%%
%%
%
In the above definition, $f$ can be a constant, or a function of $n$. For more details and different notions of numerical stability we refer to standard textbooks \cite{gallopoulos2016parallelism,golub2013matrix,higham2002accuracy} and references therein. Throughout the paper, we might refer to $O(1)$-stable algorithms as ``fully-stable'' or simply ``stable'', and to $\polylog(n)$-stable algorithms as ``logarithmically-stable'' or ``log-stable'', as coined in \cite{demmel2007fast}.

\subsection{Triangular matrices and their condition number}
One of the ``challenging'', and intriguing, facts about triangular matrices in general, is that they can easily have an abominable condition number: Viswanath and Trefethen proved that the matrix $\matI+{\tt strict\_tril}(\matG)$, where $\matG_{i,j}\sim \mathcal{N}(0,1)$,  almost surely has an \emph{exponentially growing} condition number \cite{viswanath1998condition}. Since the floating point errors in matrix inversion depend on the condition number of the input, this fact seems very discouraging for inverting matrices in linear transformers, since they are also triangular with unit diagonal elements. 
Luckily, this is not the case. One of the key observations for our analysis is that matrices of the form of Eq. \eqref{equation:definition_of_lower_triangular_matrices} are actually \emph{well-conditioned}. This is necessary to justify the use of low-precision compute units for such operations, and it also reveals why such implementations have been working well in practice to-date.
In \cite{kexuefm-11563}, it was proved that the elements of the inverse of a DeltaNet matrix are always in $[-1,1]$. The following implication is immediate (a simple proof can be found in Appendix \ref{appendix:proof_of_corollary_condition_number_upper_bound}).
\begin{corollary}
    \label{corollary:condition_number_upper_bound}
    The condition number of an $n\times n$ DeltaNet matrix $\matA$ is at most $n^2$.
\end{corollary}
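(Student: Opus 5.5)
We need to bound $\kappa_2(\matA)=\norm{\matA}\,\norm{\matA^{-1}}$ by $n^2$ for a DeltaNet matrix $\matA = \matI - (\diag(\beta)\matK\matK^\top)\odot\matL^{-}$. The plan is to bound the two factors separately, each by $n$. For the inverse we use the result of \cite{kexuefm-11563}, which gives $|(\matA^{-1})_{ij}|\le 1$. For $\matA$ itself we need an entrywise bound, and this depends on the standing normalization of DeltaNet.

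\textbf{Step 1 (the inverse).} By \cite{kexuefm-11563}, every entry of $\matA^{-1}$ lies in $[-1,1]$. For any matrix $\matB\in\RR^{n\times n}$ with $|\matB_{ij}|\le 1$ we have
\[
\norm{\matB}\le \|\matB\|_F\le \sqrt{n^2}=n .
\]
Since $\matA^{-1}$ is lower triangular, we could sharpen this to $\sqrt{n(n+1)/2}$, but $n$ suffices. Hence $\norm{\matA^{-1}}\le n$.

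\textbf{Step 2 (the matrix).} In DeltaNet the keys are $\ell_2$-normalized, $\|\veck_i\|_2=1$, and $\beta_i\in[0,1]$; these are the same assumptions under which \cite{kexuefm-11563} is proved. For $i>j$, Cauchy--Schwarz gives
\[
|\matA_{ij}|=\beta_i|\veck_i^\top\veck_j|\le 1,
\]
and the diagonal entries equal $1$. So $\matA$ also has all entries in $[-1,1]$, and the Frobenius argument of Step 1 gives $\norm{\matA}\le n$.

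\textbf{Step 3 (combine).} Multiplying the two bounds,
\[
\kappa_2(\matA)=\norm{\matA}\,\norm{\matA^{-1}}\le n\cdot n=n^2 .
\]

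The only real obstacle is Step 2: we must make sure the entrywise bound on $\matA$ holds under the paper's definition of a DeltaNet matrix. This requires $\beta_i\le 1$ and unit-norm keys. If only $\beta_i\in(0,2)$ were allowed, we would get $|\matA_{ij}|\le 2$ and the bound would pick up a constant factor. Since the cited inverse bound itself relies on these same normalizations, we state them explicitly as the hypotheses of the corollary.
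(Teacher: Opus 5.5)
Your proof is correct and is the paper's argument: the entries of $\matA$ and (by \cite{kexuefm-11563}) of $\matA^{-1}$ lie in $[-1,1]$, so $\norm{\matA},\norm{\matA^{-1}}\le n$, and the bounds multiply. The paper uses $\norm{\matM}\le n\max_{i,j}|\matM_{i,j}|$, which your Frobenius step justifies, and you usefully make explicit the normalization $\|\veck_i\|_2=1$, $\beta_i\in[0,1]$ that the paper leaves implicit.

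One caveat, which comes from the paper's Eq.~\eqref{equation:definition_of_lower_triangular_matrices} rather than your reasoning: the entrywise bound on $\matA^{-1}$ holds for the standard WY matrix $\matI+(\diag(\beta)\matK\matK^\top)\odot\matL^{-}$, which is the sign used in the paper's experiments. It fails for the minus-sign form you wrote down: there, identical unit keys with $\beta_i=1$ give $(\matA^{-1})_{i,j}=2^{i-j-1}$ for $i>j$.
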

Note that the scaling $\phi(\cdot,\cdot)$ that takes place in Eq. \eqref{equation:definition_of_lower_triangular_matrices} typically reduces the magnitude of the off-diagonal elements, in which case it is expected to further reduce the condition number as the matrix becomes more-and-more diagonally dominant. In that respect, these types of matrices are much better conditioned than random triangular matrices, which is promising towards exploiting low-precision hardware and fast (potentially less stable) algorithms.
%
%%
%%%
\begin{minipage}{0.28\textwidth}
\begin{algorithm}[H]
\caption{MCS$(\matL,n)$:}
\label{alg:mcs}
\begin{algorithmic}[1]
\footnotesize
\State $\matL \gets 2\matI - \matL$
\State $\matX \gets \matI$
\For{$k = n-1$ down to $0$}
  \State $\matM \gets \matI + \matL{[:,k]}\cdot \vece_k^\top$
  \State $\matX \gets \matM \cdot \matX$ 
\EndFor
\State \Return $\matX$
\end{algorithmic}
\end{algorithm}
\end{minipage}
\begin{minipage}{0.37\textwidth}
\begin{algorithm}[H]
\caption{MXR$(\matL,n,b_0,r=0)$:}
\label{alg:mxr}
\begin{algorithmic}[1]
\footnotesize
\State $\matD\gets$ $b_0\times b_0$ diagonal blocks of $\matL$
\State $\matY \gets $ MCH$(\matD,b_0)$
\For{$i=1,\ldots, r$}
    \State $\matY\gets$ IR$(\matY,\matL)$ \Comment{Alg. \ref{alg:ir}}
\EndFor
\State $\matX \gets $ MBH$(\matL,n,\matY,b_0)$ \Comment{Alg.~\ref{alg:mbh}}
\State \Return $\matX$
\end{algorithmic}
\end{algorithm}
\end{minipage}
\begin{minipage}{0.32\textwidth}
\begin{algorithm}[H]
\caption{NS$(\matL,n,m,\matX_0)$:}
\label{alg:ns}
\begin{algorithmic}[1]
\footnotesize
\State $\matA\gets \matI+\matL$
\State $\matX\gets \matX_0$
\For{$k=0,\ldots,m-1$}
\State $\matY\gets \matA\cdot\matX$
\State $\matX \gets 2\matX - \matX\cdot \matY$
\EndFor
\State \Return $\matX$
\end{algorithmic}
\end{algorithm}
\end{minipage}
\begin{minipage}{0.44\textwidth}
\begin{algorithm}[H]
\caption{MBH$(\matL,n,\matX_0=\matI,b_0=1)$:}\label{alg:mbh}
\begin{algorithmic}[1]
\footnotesize
\State $\matX\gets \matX_0$
\For{$b=b_0, 2b_0, 4b_0,\ldots,n$}
    \State $\matD_{e}\gets \text{even $b\times b$ diagonal blocks of } \matX$
    \State $\matD_{o}\gets \text{odd $b\times b$ diagonal blocks of } \matX$
    \State $
        \matX \leftarrow \matD_{e} + \matD_{o}- \matD_{o}\cdot \matL \cdot \matD_{e}
        $
\EndFor
\State \Return $\matX$
\end{algorithmic}
\end{algorithm}
\end{minipage}
\hfill
\begin{minipage}{0.28\textwidth}
\begin{algorithm}[H]
\caption{MCH$(\matL,n)$:}
\label{alg:mch}
\begin{algorithmic}[1]
\footnotesize
\State $\matX \gets \matI-\matL$
\State $\matY\gets \matL$
\For{$k = 1,2,\ldots,\log(\tfrac{n}{2})$}
  \State $\matY \gets \matY\cdot\matY$
  \State $\matX \gets \matX+\matX\cdot\matY$ 
\EndFor
\State \Return $\matX$
\end{algorithmic}
\end{algorithm}
\end{minipage}
\hfill
\begin{minipage}{0.25\textwidth}
    \vspace{-0.7cm}
    \begin{algorithm}[H]
    \caption{IR$(\matY,\matL)$:}
    \label{alg:ir}
    \begin{algorithmic}[1]
    \footnotesize
    \State $\matA\gets \matI-\matL$
    \State $\matR\gets\matI-\matY\matA$
    \State $\matX\gets \matY+\matR\matY$
    \State \Return $\matX$
    \end{algorithmic}
    \end{algorithm}
\end{minipage}
%%%
%%
%

%
%%
%%%
\section{Analysis of methods}
%%%
%%
%
In this section, we provide the analysis of the main triangular inverse methods considered here, in terms of their stability properties and their computational complexity. Complexity is mainly focused on matrix products, which is also in-line with recent proposals on models of computation that focus on modern AI accelerators  \cite{chowdhury2020computational,chowdhury2021algorithm,sobczyk2025segmented,zouzias2023parallel}. A fine-grained ``bit complexity'' analysis can be inferred by the floating point precision bounds. Algorithms \ref{alg:mcs}-\ref{alg:ir} provide a high-level description of the methods.

\subsection{Vectorized and matrix product-based column-sweep (VCS and MCS)}

The first algorithm that we discuss is the one proposed by Sameh and Brent \cite{sameh1977solving}, which, to our knowledge, was the first parallel algorithm to solve triangular systems with logarithmic depth. It is often referred to as the \emph{column-sweep} method, and it can be implemented in a vectorized form, or in a matrix product-based formulation (see also~\cite{gallopoulos2016parallelism}). 

\begin{lemma}
    \label{lemma:stability_of_column_sweep}
    Column-sweep is numerically-stable, as per Definition \ref{definition:numerically_stable}. For well-conditioned matrices (e.g. DeltaNet) that satisfy Corollary \ref{corollary:condition_number_upper_bound}, $t=\log(1/\epsilon)+O(\log(n))$ bits of accuracy are sufficient so that the returned solution $\matAtilde^{-1}$ satisfies $\|\matAtilde^{-1}-\matA^{-1}\|_2 \leq \epsilon \|\matA\|_2^{-1}$, for any $\epsilon\in(0,1)$.
    \begin{proof}
        The proof can be found in Appendix \ref{appendix:proof_of_lemma_stability_of_column_sweep}.
    \end{proof}
\end{lemma}
The corresponding matrix product-based formulation is described in Algorithm \ref{alg:mcs}. 
In general, the MCS algorithm below generates the matrices $\matM_{n-1} \matM_{n-2} \dots \matM_1$ and performs the chained matrix products. One drawback of the MCS method is that, for matrix size $n$, it requires $n-1$ matrix products. In terms of operations, the complexity of these two algorithms is as follows:
\begin{description}
\item[Complexity (VCS):] $O(n^2)$ \textbf{vector ops} of length $n, n-1, n-2, \ldots, 2, 1$.
\item[Complexity (MCS):] $n-1$ \textbf{matmuls} of size $n \times n$, $O(n)$ \textbf{vector ops} of length $O(n)$.
\end{description}

\subsection{Inverse via characteristic polynomial (MCH)}

\label{section:unstable_inverse}
The main drawback of the coulmn-sweep algorithm is that it executes $O(n)$ square matrix products of size $n$. 
The authors of~\cite{zhong2025understanding} highlight that an alternative algorithm can be considered for such cases, which reduces inverse computation to $O(\log(n))$ matrix products. It can be straightforwardly derived from \emph{Cayley--Hamilton theorem}, by noting that all eigenvalues are equal to one, and therefore the characteristic polynomial can be derived analytically. The inverse in this case is given by:
\begin{equation}
(\matI+\matL)^{-1} = \matI - \matL + \matL^2 - \matL^3 + \cdots + (-1)^n \matL^n,
\end{equation}
where $\matL$ is an arbitrary strict triangular square matrix of size $n$, and it can be quickly computed via repeated squaring: we first set $\matX \gets \matI - \matL$ and $\matY \gets \matL$, and, thereafter, for $i = 1, \dots, \lfloor \log(n) \rfloor - 1$, the algorithm updates first $\matY \gets \matY \cdot \matY$, and then $\matX \gets \matX + \matX \cdot \matY$. It finally returns the last $\matX$. An algorithmic description is listed in Algorithm \ref{alg:mch}.
\begin{description}
\item[Complexity (MCH):] $\approx 2\log(n)$ square matrix products of size $n$.
\end{description}
In terms of stability, this algorithm and other ones related to the characteristic polynomial are known to have terrible numerical properties (see e.g. the discussion in \cite{codenotti2001role}). The main issues arise due to the rapid growth of intermediate values caused by the repeated squaring. Consider for example the matrix $\matA=\matI+\matL^{-}$ where $\matL^{-}$ is the strict lower triangular matrix with all entries equal to $-1$. After $\log(k)$ iterations of MCH, the elements of the matrix $\matY_k=(\matL^{-})^k$ can be derived analytically: $(\matL^{-})^k_{i,j} = \binom{i-j-1}{k-1}$. Specifically, the element $(\matL^{-})^{n/2}_{n,1} = \binom{n-2}{n/2-1}$, grows approximately as $\sim 2^n/\sqrt{n}$, up to constant factors.
This can easily lead to overflow, especially in low-precision formats. However, as long as the algorithm is restricted to very small (constant-sized) matrices, the numerical errors can be maintained to an acceptable level. While this algorithm should  be generally avoided, it can be used (carefully) as a subroutine for handling the inverses of small diagonal blocks during the execution of more stable algorithms.

\subsection{A log-stable matrix recursive algorithm (MBH)}
\label{section:log_stable_inverse}
In this section, we revisit a recursive triangular inversion algorithm, which we call MBH, since (to our knowledge) the first analysis is due to Bunch and Hopcroft \cite{bunch1974triangular}. In \cite{demmel2007fast} it was proved that the algorithm is $\polylog(n)$-stable. The algorithm is based on the following observation:
%
%%
%%%
\begin{equation}
\matA^{-1} =
\begin{pmatrix}
\matA_{11} & \matA_{12} \\
0 & \matA_{22}
\end{pmatrix}^{-1}
=
\begin{pmatrix}
\matA^{-1}_{11} & -\matA^{-1}_{11} \matA_{12} \matA_{22}^{-1} \\
0 & \matA_{22}^{-1}
\end{pmatrix}.
\label{eq:recursive_bunch_hopcroft}
\end{equation}
%%%
%%
%
where $\matA_{11}$ and $\matA_{22}$ are square matrices. At each step, it involves computing the inverses of two matrices of half the size, and two matrix products of half the size. In practice, when the individual sub-blocks become very small, it can become a waste of resources to launch a separate matrix product for each tiny block. Instead, it is preferable to use an ``unrolled'' recursion, in order to group together small matrix products that occur at the lowest levels of the recursion. 
The stability analysis of \cite{demmel2007fast}, together with Corollary \ref{corollary:condition_number_upper_bound}, implies a bound on the number of precision bits.
\begin{proposition}
\label{proposition:stability_of_bunch_hopcroft}
Fix $\epsilon\in(0,1)$. It is sufficient to use $t=\log(1/\epsilon)+O(\log^c(n))$ bits of precision to invert a well-conditioned matrix $\matA$ that satisfies Corollary \ref{corollary:condition_number_upper_bound} (e.g. DeltaNet), using Alg.~\ref{alg:mbh}, such that the returned solution $\matAtilde^{-1}$ satisfies $\|\matAtilde^{-1}-\matA^{-1}\|_2\leq \epsilon\|\matA^{-1}\|_2$.
\end{proposition}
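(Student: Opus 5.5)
The plan is to combine the log-stability bound for the recursive inversion with the polynomial bound on $\kappa_2(\matA)$ from Corollary~\ref{corollary:condition_number_upper_bound}, and then solve for the number of bits $t$.

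\textbf{Step 1: invoke the error bound for MBH.} By the analysis of \cite{demmel2007fast}, Alg.~\ref{alg:mbh} is $f$-stable in the sense of Definition~\ref{definition:numerically_stable} with $f=O(\log n)$. That is, provided $\umach$ is small enough that the first-order analysis is valid, the computed $\matAtilde^{-1}$ satisfies
\begin{align*}
\norm{\matA^{-1}-\matAtilde^{-1}} \leq c_1 n^{c_2}\,\kappa_2(\matA)^{c_3\log n}\,\umach\,\norm{\matA^{-1}}.
\end{align*}
The exponent on $\kappa_2$ comes from unrolling the recursion~\eqref{eq:recursive_bunch_hopcroft}. Each of the $\log n$ levels multiplies already-perturbed inverses of diagonal blocks. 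The perturbation of a diagonal block $\matA_{ii}^{-1}$ is amplified by $\norm{\matA_{jj}}\norm{\matA_{jj}^{-1}}\le\kappa_2(\matA)$, since diagonal blocks of a triangular matrix have norms bounded by those of the full matrix and of its inverse. So each level costs at most a constant power of $\kappa_2(\matA)$.

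I would take Demmel et al.'s result as given. For the iterative (unrolled) form of Alg.~\ref{alg:mbh}, I would only check that it performs exactly the same block products as the recursion. With $\matX_0=\matI$ and $b_0=1$, the update $\matD_o\matL\matD_e$ produces the off-diagonal block $-\matA_{22}^{-1}\matA_{21}\matA_{11}^{-1}$ (lower-triangular convention, with $\matL$ the off-diagonal part of $\matA$). Hence the same error recursion applies.

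\textbf{Step 2: plug in the conditioning.} By Corollary~\ref{corollary:condition_number_upper_bound}, $\kappa_2(\matA)\le n^2$. Therefore
\begin{align*}
\kappa_2(\matA)^{c_3\log n}\le 2^{2c_3\log^2 n}, \qquad n^{c_2}=2^{c_2\log n}.
\end{align*}
The total amplification factor is then at most $c_1 2^{O(\log^2 n)}$.

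\textbf{Step 3: choose $t$.} Setting $\umach=2^{-t}$ and requiring $c_1 2^{O(\log^2 n)}2^{-t}\le\epsilon$ gives $t\ge\log(1/\epsilon)+O(\log^2 n)$. This is the claim with $c=2$.

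\textbf{Main obstacle.} The delicate point is the validity condition hidden in Step 1. The bound in \cite{demmel2007fast} holds only when $\umach$ times the amplification factor is at most a small constant, so that higher-order error terms are absorbed. With $t$ chosen as above, this product is at most $\epsilon<1$. So the condition holds automatically, possibly after enlarging the constant in the $O(\cdot)$.

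A second point to verify is that no intermediate overflow occurs, since the paper uses $p$ exponent bits. All intermediate blocks are sub-blocks of $\matA^{-1}$, which has entries in $[-1,1]$ by \cite{kexuefm-11563}, or of $\matA$, which is bounded. So magnitudes stay $O(n)$, and this reduces to a mild requirement on the exponent range.
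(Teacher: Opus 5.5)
Your proposal is correct and is essentially the paper's argument: the paper likewise combines the $\polylog(n)$-stability of the recursive inversion from \cite{demmel2007fast} with the bound $\kappa_2(\matA)\le n^2$ from Corollary~\ref{corollary:condition_number_upper_bound} and solves for $t$, exactly as in your Steps 1--3 (which give $c=2$ when the exponent on $\kappa_2$ is $O(\log n)$). Your extra checks go beyond what the paper writes, namely the equivalence of the unrolled iteration with the recursion and the first-order validity condition; the only loose point is the overflow remark, since products such as $\matD_o\matL$ and the off-diagonal blocks that are later discarded are not sub-blocks of $\matA$ or $\matA^{-1}$, so their magnitudes are bounded by $\poly(n)$ rather than $O(n)$, which is still harmless.
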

\begin{description}
\item[Complexity (MBH):] $\approx 2\log(n)$ matrix products of size $n\times n$ (proof in Appendix \ref{appendix:proof_of_lemma_unrolled_mbh_derivation}).
\end{description}
\subsection{A mixed recursive algorithm (MXR)}
\label{section:mixed_recursive_algorithm}
One issue of the recursive algorithm of the previous section is that, for ``tiny'' block-sizes, it needs to execute two full $n\times n$ matrix multiplication per iteration, while most of the elements in these cases are almost zero, since the matrices are almost diagonal. To overcome this for practical purposes, we propose to use the unstable algorithm of Section \ref{section:unstable_inverse} only for a few initial iterations, e.g., for block-sizes up to $16\times 16$, in order to compute the small diagonal inverses as a starting point, and then continue the log-stable iteration for the later steps. This increases the hardware utilization, and, since the unstable algorithm is only executed for constant-sized blocks, the full algorithm in its entirety remains logarithmically stable (albeit, the ``constant errors'' might be large, and should still be treated with care).
An example of this approach, which we later show to be extremely efficient in practice, is outlined below (see also Algorithm \ref{alg:mxr}):
%
%%
%%%
\begin{enumerate}
\item Use the algorithm of Section \ref{section:unstable_inverse} in order to compute the small inverses of the diagonal blocks of $\matA$, where each block has size $C \times C$ for some constant $C$ (e.g., $C\in\{16,32\}$).
\item Use the unrolled recursive algorithm of Section \ref{section:log_stable_inverse} to compute the final inverse.
\end{enumerate}
The stability properties of Algorithm \ref{alg:mxr} are summarized in Theorem \ref{theorem:stability_of_mxr}.
\begin{theorem}
    Algorithm \ref{alg:mxr} is $\polylog(n)$-stable for all input matrices $\matL$ with $\|\matL\|_2=\poly(n)$.
    \begin{proof}
        When $\|\matL\|_2=\poly(n)$, Lemma \ref{lemma:stability_of_mch} in the appendix implies that the  MCH algorithm, which inverts the small diagonal blocks in the first phase, requires at most $\log(1/\epsilon)+\polylog(n)$ bits of precision to return $\epsilon$-approximate block-inverses, i.e., the first phase is $\polylog(n)$-stable. In the second phase, Algorithm \ref{alg:mbh} takes as input those initial block-inverses to compute the entire inverse, and from \cite{demmel2007fast} we know that this second phase is also $\polylog(n)$-stable.
    \end{proof}
    \label{theorem:stability_of_mxr}
\end{theorem}
\begin{description}
\item[Complexity (MXR):] $\approx 2\log(n)$ matrix products of size $n\times n$.
\end{description}
%%%
%%
%
%
%%
%%%
\subsection{Iterative inversion with Newton-Schulz}
%%%
%%
%
All aforementioned algorithms so far are \emph{direct}, in a sense that, in exact arithmetic, they return the exact inverse after a finite number of steps. In this section we revisit the Newton-Schulz iteration \cite{schulz1933iterative}, which, as Higham highlights in \cite{higham2002accuracy}, ``...is attractive because it involves only matrix multiplication''. 
Given a square matrix $\matA$, and an initial guess $\matX_0$ for the inverse of $\matA$, the method iteratively updates:
\begin{align}
    \matX_{k+1} \gets \matX_{k}(2\matI-\matA\matX_{k}),\quad k=0,1,2,\ldots
    \label{eq:newton_schulz}
\end{align}

This iteration has been studied in depth for computing the inverse and/or pseudo-inverse of general matrices \cite{pan1985efficient,pan1991improved,soderstrom1974numerical,zakaulah2014efficient}, but we are not aware of a dedicated analysis for unit-diagonal triangular matrices. Nevertheless, we can obtain quite strong convergence and stability guarantees from existing analyses. 
In \cite{pan1985efficient}, Pan and Reif provide explicit bounds on the convergence and number of iterations.

With a straightforward error analysis, one can show that a single iteration of Newton-Schulz is numerically stable. However, carrying over the analysis to multiple iterations to achieve end-to-end numerical stability for the inverse, as in Definition \ref{definition:numerically_stable}, is quite tedious. At this point we can refer to the recent and very rigorous analysis of Shah \cite{shah2025hermitian}, who showed end-to-end stability guarantees of the Newton-Schulz iteration for the sign function, and proved that, indeed, the method is stable for that task. Given the existing analyses, there is no reason to suspect that Eq. \eqref{eq:newton_schulz} is unstable. However, to our knowledge, a formal proof remains an interesting open question.
The next proposition directly follows from the analysis in \cite{pan1985efficient} and the stability assumption. The proof is located in Appendix \ref{appendix:proof_of_proposition_stability_of_ns}
\begin{proposition}[Follows from \cite{pan1985efficient}]
    \label{proposition:stability_of_ns}
    Fix $\epsilon>0$. Assume that the Newton-Schulz method is stable, in a sense that, after $k$ iterations, the returned matrix $\matAtilde^{-1}$ satisfies $\|\matAhat^{-1}-\matAtilde^{-1}\|_2\leq \umach \cdot (n\cdot \kappa(\matA))^{O(1)} \cdot \|\matA^{-1}\|_2$, where $\matAhat^{-1}$ is the true matrix after $k$ iterations in exact arithmetic. Then, for $\poly(n)$-conditioned triangular matrices, after $O(\log(n))$ iterations $\matAtilde^{-1}$ satisfies $\|\matAtilde^{-1}-\matA^{-1}\|_2\leq \epsilon \|\matA^{-1}\|_2$ using $t=\log(1/\epsilon)+O(\log(n))$ bits of precision, using $\matX_0=\matI/n^{O(1)}$ as an initial guess. 
\end{proposition}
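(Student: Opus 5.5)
The argument separates the exact-arithmetic convergence of the iteration in Eq. \eqref{eq:newton_schulz} from the floating-point error, which the stability assumption controls. We then combine the two with a triangle inequality:
\begin{align*}
\|\matAtilde^{-1}-\matA^{-1}\|_2 \leq \|\matAtilde^{-1}-\matAhat^{-1}\|_2 + \|\matAhat^{-1}-\matA^{-1}\|_2 .
\end{align*}
The second term is the truncation error after $k$ exact iterations. The first term is bounded directly by hypothesis, by $\umach\,(n\kappa(\matA))^{O(1)}\|\matA^{-1}\|_2$.

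\textbf{Truncation term.} Write $\matR_k:=\matI-\matA\matX_k$ for the exact iterates. The iteration gives $\matR_{k+1}=\matR_k^2$, hence $\matR_k=\matR_0^{2^k}$ and
\begin{align*}
\matA^{-1}-\matX_k=\matA^{-1}\matR_0^{2^k}.
\end{align*}
Following \cite{pan1985efficient}, I would work with the commuting choice $\matX_0=\alpha\matA^\top$, where $\alpha=1/(\|\matA\|_1\|\matA\|_\infty)$. For this choice $\matR_0=\matI-\alpha\matA\matA^\top$ is symmetric, with eigenvalues in $[0,1-1/(n\kappa^2(\matA))]$.

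The statement instead says $\matX_0=\matI/n^{O(1)}$. For that initial guess, $\matR_0=\matI-\matA/n^{c}$ is unit-triangular-like, with all eigenvalues equal to $1-n^{-c}$, so a normal-matrix spectral argument is not available. I would handle it by bounding
\begin{align*}
\|\matR_0^{m}\|_2 \leq (1-n^{-c})^{m}\cdot\poly(n,m)\cdot\max_j \|\matL\|_2^{j}\binom{m}{j}n^{-cj},
\end{align*}
obtained by expanding $(q\matI-n^{-c}\matL)^m$ with $q=1-n^{-c}$ and using the nilpotency $\matL^n=0$, so only $j<n$ terms survive. Alternatively, I would interpret the statement as using the Pan--Reif scaled-transpose start, which is the cleanest route. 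Either way, the goal is $\|\matR_0^{2^k}\|_2\leq\epsilon/2$ once $2^k\geq \poly(n)\log(1/\epsilon)$, i.e. once
\begin{align*}
k=O(\log n+\log\log(1/\epsilon)).
\end{align*}
This uses that $\kappa(\matA)=\poly(n)$, e.g. $n^2$ for DeltaNet matrices by Corollary \ref{corollary:condition_number_upper_bound}. For fixed $\epsilon$ this count is $O(\log n)$, and the truncation term is at most $\tfrac{\epsilon}{2}\|\matA^{-1}\|_2$.

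\textbf{Rounding term.} By the stability assumption, this term is at most $2^{-t}(n\kappa)^{O(1)}\|\matA^{-1}\|_2$. Since $\kappa=\poly(n)$, this factor is $2^{-t}n^{O(1)}$. Choosing $t=\log(2/\epsilon)+O(\log n)$ makes it at most $\tfrac{\epsilon}{2}\|\matA^{-1}\|_2$. Adding the two terms gives the claim.

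\textbf{Main obstacle.} The hard step is the truncation bound for the non-normal case $\matX_0=\matI/n^{O(1)}$. The Jordan-type transient growth of $\matR_0^m$ must be shown to be polynomial, not exponential, before the geometric factor $(1-n^{-c})^m$ takes over. I would first try the binomial expansion above, choosing $c$ large enough that $n^{-c}\|\matL\|_2\leq 1/n$. Then each term is at most $\binom{m}{j}n^{-j}$, and the sum over $j<n$ is controlled by $(1+1/n)^m$ times the decay factor. If this fails, I would fall back on the Pan--Reif scaled-transpose initialization, for which the argument is immediate.
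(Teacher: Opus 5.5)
Your architecture matches the paper's. You use the triangle inequality, charge the rounding term to the stability hypothesis (giving $t=\log(1/\epsilon)+O(\log n)$ once $\kappa(\matA)=\poly(n)$), and rely on exact-arithmetic convergence for the truncation term. The paper handles the truncation term by citing Pan--Reif under the hypothesis $\|\matI-\matA\matX_0\|<1-n^{-O(1)}$, and it only asserts that $\matX_0=\matI/n^{O(1)}$ meets this hypothesis. Your concern about non-normality is justified: in the $2$-norm, $\|\matI-\alpha\matA\|_2<1$ for small $\alpha>0$ requires $x^\top\matA x>0$ for all $x\neq0$, and the paper does not verify this. A direct argument through the nilpotency of $\matL$ is therefore more self-contained, since it needs no definiteness assumption. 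Your expansion of $(q\matI-n^{-c}\matL)^m$ is the right starting point.

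However, the estimate you then propose fails. You bound $\sum_{j<n}\binom{m}{j}n^{-j}$ by $(1+\tfrac1n)^m$ and multiply by $(1-n^{-c})^m$, which gives $\bigl((1+\tfrac1n)(1-n^{-c})\bigr)^m$. Its base exceeds $1$ for $c\ge2$. Your own condition $n^{-c}\|\matL\|_2\le 1/n$ forces $c\ge 2$ whenever $\|\matL\|_2$ is of order $n$, as it is for a DeltaNet matrix with all keys equal. The bound then grows exponentially in $m$, because summing the full binomial series discards exactly the nilpotency you meant to exploit. The fallback $\matX_0=\alpha\matA^\top$ is correct, but it proves the claim for a different initial guess than the one in the statement.

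The repair is to keep the cutoff $j\le n-1$. Set $q=1-n^{-c}$ with $c\ge1$, $y:=mn^{-c}$ and $\ell:=\|\matL\|_2$. Using $\binom{m}{j}\le m^j/j!$ and $q^{m-j}\le q^{m-n+1}\le e\cdot e^{-y}$, you get, whenever $y\ell\ge1$,
\[
\|\matR_0^{m}\|_2\le\sum_{j=0}^{n-1}\binom{m}{j}\,q^{m-j}\Bigl(\frac{\ell}{n^{c}}\Bigr)^{j}\le e\,e^{-y}\sum_{j=0}^{n-1}\frac{(y\ell)^j}{j!}\le e\,n\,e^{-y}\,(y\ell)^{n-1}.
\]
The transient factor $(y\ell)^{n-1}$ need not be polynomial in $n$, but it is polynomial in $m$, and that suffices. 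Since $\|\matA^{-1}\|_2\ge1$ for unit-triangular $\matA$, you have $\ell\le\kappa(\matA)+1=\poly(n)$. The right-hand side is then at most $\epsilon/2$ once $y\ge C\,(n\log n+\log(1/\epsilon))$ for a suitable constant $C$. Thus $2^k=n^{c}\cdot O(n\log n+\log(1/\epsilon))$ suffices, i.e.\ $k=O(\log n+\log\log(1/\epsilon))$. This is exactly the count you were aiming for, and with it the rest of your argument goes through for the stated initial guess.
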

\begin{description}
\item[Complexity (NS):] $2\times N_{iter}$ matrix products of size $n\times n$ (typically $N_{iter}\approx 2\log(n)$).
\end{description}
\subsection{Iterative refinement}
A standard technique to improve the accuracy of numerical algorithms for solving linear systems of equations (especially in floating point arithmetic) is iterative refinement \cite{higham1997iterative,moler1967iterative,wilkinson1948progress}. In the context of this work, we use the following iterative scheme. If $\matAtilde^{-1}_0$ is the approximate inverse returned by an algorithm, then for $k=0,1,\ldots$ we iteratively update:
\begin{align}
    (i)\ \matR_k \gets \matI-\matAtilde_k^{-1}\matA,\qquad (ii)\ \matAtilde^{-1}_{k+1} \gets \matAtilde^{-1}_k + \matR_k\matAtilde^{-1}_k.
    \label{eq:iterative_refinement}
\end{align}
As it is shown later in the numerical experiments, this simple iterative scheme, which involves only two matrix products per iteration, can significantly increase the final accuracy in some cases, even with a single iteration.

\section{Evaluating the numerical accuracy of the methods}
In this section we evaluate experimentally the numerical accuracy of all the aforementioned methods.

\subsection{Accuracy of the computed inverse}
We first assess the accuracy of each method in terms of the posterior errors of the returned inverse compared to the true inverse of the input matrix. For the experimental results illustrated in Figure \ref{fig:inverse_accuracy_comparison_kernel_level}, the input matrix is equal to $\matA=\matI+\matL$, where $\matL:={\tt strict\_tril}(\matK\matK^\top)$, and the rows of $\matK$ are samples independently from the unit sphere, resembling standard DeltaNet matrices.

The ``true inverse'' $\matA^{-1}$ is obtained computationally by calling the inverse functions provided by PyTorch in double-precision, which internally calls BLAS/LAPACK routines that are highly accurate. To have a broad overview, we measure three different types of forward errors:
\begin{align*}
    \text{Max abs.: }\max_{i,j}|\matA^{-1}_{i,j} - \matAtilde^{-1}_{i,j}|,\quad 
    \text{Max rel.: } \max_{i,j\leq i}\tfrac{|\matA^{-1}_{i,j} - \matAtilde^{-1}_{i,j}|}{|\matA_{i,j}^{-1}|},
    \quad 
    \text{Frob.-norm relative: } \tfrac{\|\matA^{-1}-\matAtilde^{-1}\|_F}{\|\matA^{-1}\|_F}.
\end{align*}

We report results for three different types of numerical precision for the inputs: float16, bfloat16, and float32. Internally in the algorithms, matrix products are always executed in float32 precision, even if they receive the input matrices in lower precision; this is standard in modern hardware, since every matrix product might need more bits to store the final result. 
\begin{figure}[htb]
    \centering
    \includegraphics[width=\linewidth]{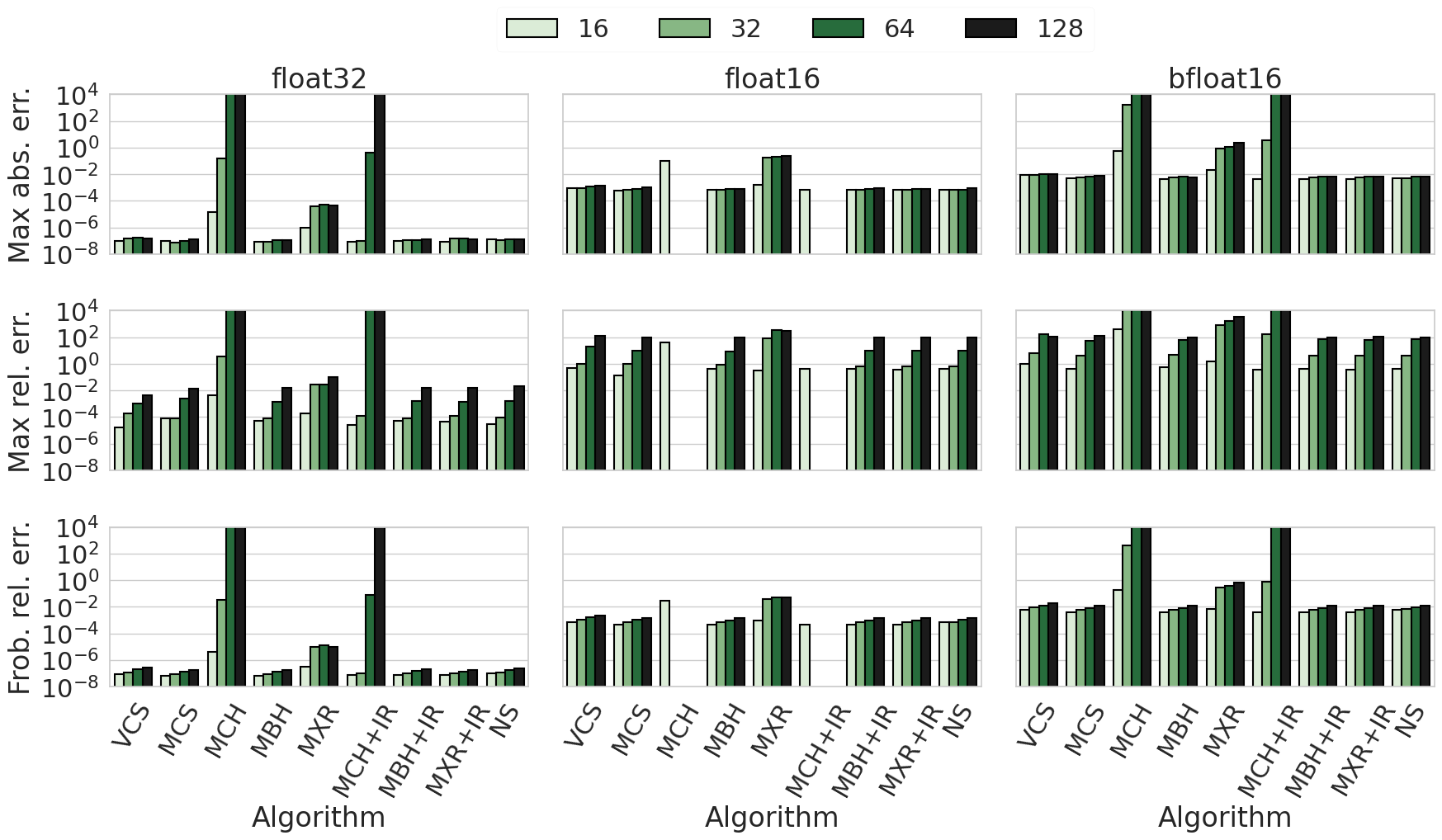}
    \vspace{-0.7cm}
    \caption{Comparison of the inverse accuracy of various methods, for three different input precision formats, and four matrix sizes.}
    \label{fig:inverse_accuracy_comparison_kernel_level}
\end{figure}
In single precision, all methods except MCH and MXR return very accurate results, where the relative errors are close to machine precision. MXR (without refinement) achieves about two to three fewer digits of accuracy than the other methods on average, but this is remediated with a single step of iterative refinement. The unstable algorithm, MCH, returns completely wrong results for sizes 64 and 128, but the errors are within acceptable range for size 16, and barely acceptable for size 32. 

In the half-precision case, float16, the errors of all methods increase substantially. The errors are no longer within the ideal machine precision error range, but they are still within an acceptable threshold, achieving about three-to-four digits of accuracy on average. The unstable algorithm returns NaN values for all sizes above 16, while, again, MXR achieves decent accuracy, which is on-par with the other methods after a single step of iterative refinement.

The bfloat16 format slightly ``improves'' the accuracy of the unstable MCH algorithm: the returned values are no longer NaN, which is expected since bfloat16 has a larger exponent range than float16, and can help with extremely large values. However, the errors of MCH remain unacceptably large, and all the returned digits are completely wrong. At the same time, all other methods return higher errors in bfloat16 than in float16, since bfloat16 uses fewer bits for the mantissa. With these observations, float16 should be preferred against bfloat16 for inverting triangular matrices in linear transformers architectures.
%
%%
%%%
\subsection{The effect of numerical accuracy in end-to-end benchmarks}
%%%
%%
%
As already indicated in Fig.~\ref{fig:gdn_runtime_breakdown} (right), an unstable algorithm such as Alg. \ref{alg:mch} can easily collapse the model accuracy to zero. In the Appendix (Fig.~\ref{fig:inverse_accuracy_comparison_extended}) we show that all the stable algorithms, including Alg. \ref{alg:mxr} (MXR) which uses Alg. \ref{alg:mch} as a subroutine, maintain end-to-end accuracy for three different models. The end-to-end accuracy is obtained by modifying the current implementation of the inverse on SGLang~\cite{zheng2024sglang} to use our algorithms, and then by running the MMLU~\cite{hendryckstest2021} benchmark over 10 different subjects (for more details on the MMLU benchmark we refer to Appendix \ref{appendix:mmlu_benchmarks}). The optimized kernels of SGLang that we use as baseline can be found online \cite{sglang-kernel-npu} and such baseline is labeled in Fig.~\ref{fig:gdn_runtime_breakdown} (right) as ``Triton''. We keep the same naming throughout the whole paper.

\vspace{-0.2cm}
\begin{minipage}{0.64\textwidth}
\begin{figure}[H]
\label{fig:combined}
\centering
    \centering
    \includegraphics[width=0.9\linewidth]{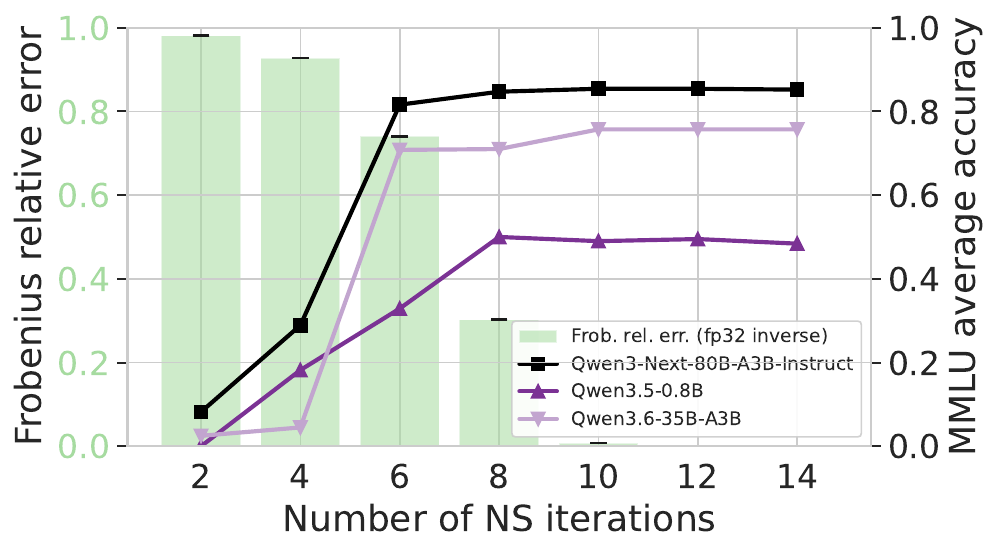}
    \caption{Number of Newton-Schulz iterations (x-axis) versus matrix inversion error using fp32 (left y-axis) and MMLU average accuracy (right y-axis). Chunk size $C=64$.}
    \label{fig:ns-iterations}
\end{figure}
\end{minipage}
\hfill
\begin{minipage}{0.32\textwidth}
    \begin{figure}[H]
    \centering
    \includegraphics[width=0.9\linewidth]{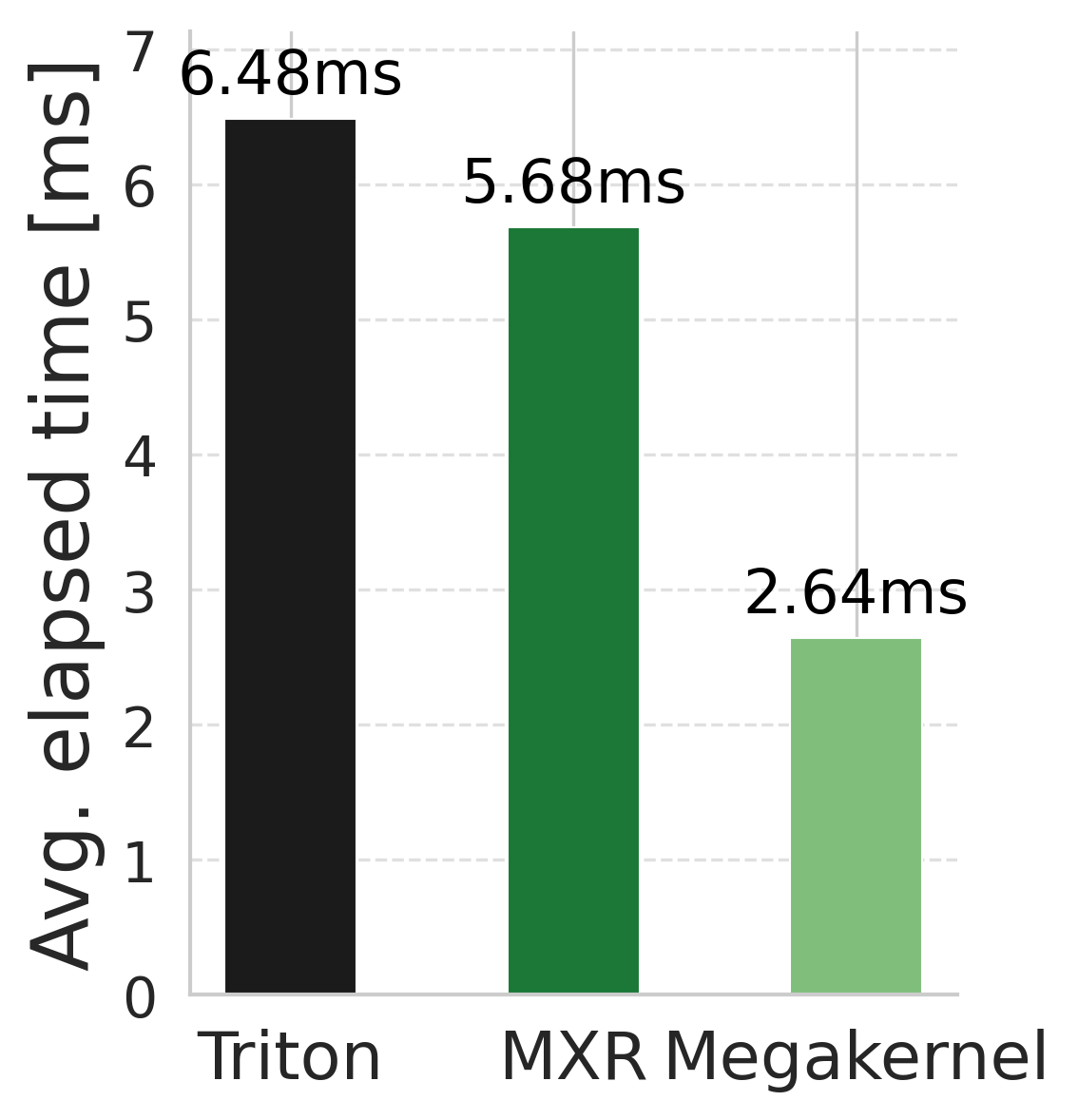}
    \caption{GDN layer run on device using Qwen3.5-0.8B tensors as input (B=8, N=16, S=1024, $C=64$ or $C=128$)}
    \label{fig:gdn-breakdown-on-device}
    \end{figure}
\end{minipage}

We next investigate numerical accuracy in more detail by analyzing how errors in computing the triangular inverse affect end-to-end model performance. We employ the Newton–Schulz method and evaluate it at different iteration counts. For each checkpoint, we measure the relative error with respect to the double-precision inverse computed using NumPy, as well as the average accuracy of several models on MMLU.

Interestingly, as seen in Fig.~\ref{fig:ns-iterations}, even after only six iterations, the model already starts to perform relatively well, with 81.6\% accuracy for Qwen3-Next-80B-A3B-Instruct. While this is an impressive result considering the relative errors, the accuracy drops are still significant: 4.9\% (Qwen3.6-35B-A3B), 16.6\% (Qwen3.5-0.8B), and 3.8\% (Qwen3-Next-80B-A3B-Instruct) compared to the 12-iteration setting, which ensures numerical stability. 
We further observe that increasing the number of iterations beyond the optimal value (e.g., 14 iterations) does not have any noticeable effect on end-to-end performance.

\section{Performance evaluation and optimization strategies}
In this section we report some insights on the performance of the studied algorithms on NPUs, which have recently received significant attention especially for inference tasks. The goal is to understand to which extent the proposed algorithms can efficiently utilize the underlying hardware of modern AI accelerators, and how competitive they are with well-established implementations (Triton, TileLang~\cite{wang2026tilelang}) on major LLM-serving frameworks.

The NPU devices used in our experimentation contain $48$ SIMD units and $24$ matrix multiplication cores capable of performing square matrix multiplication of size up to $128$. The theoretical memory bandwidth
is 1.2 TB/s. The host is an ARM-based CPU with 48 cores.
Major LLM serving frameworks, in particular, SGLang and vLLM, officially support NPUs and provide optimized inverse primitives that we use as baselines.
\begin{figure}[htb]
    \centering
    \includegraphics[width=\textwidth]{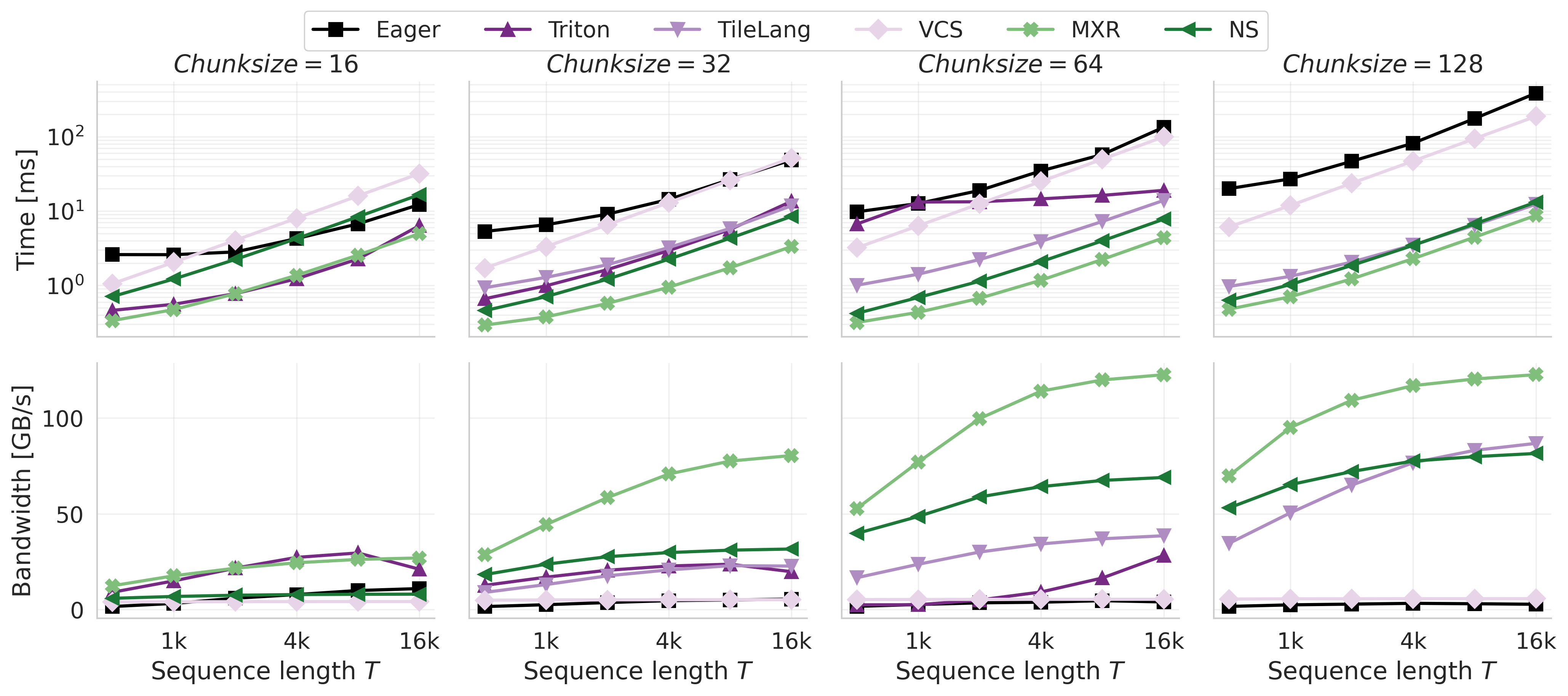}
    \caption{Kernel evaluation of different inverse algorithms. $B=32$, $N=4$ and varying sequence length up to $16K$.}
    \label{fig:kernel-eval}
\end{figure}
\subsection{Scalability of individual inverse kernels}
At the kernel level, we measure the execution time of the algorithms in isolation.
We run 5 warm-up iterations followed by 20 measured iterations, and clear the data cache before each iteration.
We set batch size $32$, $4$ attention heads, and varying sequence length up to $16K$.
In Fig.~\ref{fig:kernel-eval}, we compare our algorithms against three baselines.
The Pytorch Eager reference is a basic python implementation based on the column-sweep algorithm. The Triton implementation is the official implementation for SGLang on NPUs~\cite{sglang-kernel-npu} and uses a variation of the BCH algorithm. The Tilelang implementation is another kernel that is provided by the Tilelang framework for NPUs~\cite{tilelang-ascend} as an example of optimized algorithm. They also used a similar approach to Triton's, but in a different framework, and by materializing some dimensions of the input.

Our best algorithm, MXR, achieves up to \textbf{4.3$\times$} speedup over the official implementation and a \textbf{$3\times$} speedup over TileLang. This confirms that MXR is the best among the proposed candidates as a fast and stable triangular inverse algorithm for GDNs on NPUs.
\subsection{Full-layer performance comparison}
To see how our kernels performs at the layer level we embed our best performing kernel, MXR, into the GDN layer implemented in~\cite{sglang-kernel-npu}. By doing so we improve the layer performance by $1.14\times$ compared to the existing implementation, (See Fig.~\ref{fig:gdn-breakdown-on-device}).
As it is well-known, however, improvements at the kernel level can easily get ``absorbed'' in end-to-end benchmarks by other, unoptimized kernels, due to Amdahl's law. To that end, we further improved the GDN layer by developing our own GDN ``mega-kernel''. This kernel uses the MXR algorithm, fusing it with all the other five GDN kernels, and supporting chunk size 128. This yields a final speedup of $2.5\times$ compared to the baseline.

\section{Conclusion}
In this work we revisited triangular inversion algorithms in the context of Linear Attention with the Delta-Rule. We studied both direct and iterative methods, focusing on algorithms that are rich in matrix products and have the potential to efficiently utilize modern hardware that supports such operations. For each method, a detailed numerical stability and complexity analysis was provided, highlighting their effectiveness and their robustness against numerical errors. Besides the theoretical properties, we provided detailed experimental evaluations, covering a wide-spectrum of properties. These included numerical accuracy benchmarks, both at the kernel-level, in different floating point formats, and at the end-to-end model accuracy level. Moreover, we implemented and empirically evaluated the performance of the methods on NPUs. At the kernel level, our fastest and stable implementation achieved up to $4.3\times$ speedup against optimized baselines, and up to $2.5\times$ speedup at the entire model-layer level, when optimized as a whole. We hope that these findings will serve as a guidance for fast and robust AI kernel development, with high performance, and, importantly, reliable numerical results.

\bibliographystyle{plain}

%%%%%%%%%%
% APPENDIX
%%%%%%%%%%

\newpage
\appendix

\section{Code availability}
\label{appendix:source_code}
The source code for the kernels and the scripts to reproduce  the experimental evaluations are available in the following links:
\begin{itemize}
    \item \hyperlink{https://github.com/huawei-csl/pto-kernels}{https://github.com/huawei-csl/pto-kernels}
    \item \hyperlink{https://github.com/huawei-csl/gdn-tri-inverse}{https://github.com/huawei-csl/gdn-tri-inverse}
    \item \hyperlink{https://github.com/huawei-csl/megagdn-pto}{https://github.com/huawei-csl/megagdn-pto}
\end{itemize}

\section{Floating point model}
\label{appendix:floating_point_model}
Floating point error analysis becomes more and more important in AI computing, especially with the recent trend of reducing the bits of precision in floating point formats. For the purposes of this note, we recall some basic definitions, and refer to the classic textbook of Higham~\cite{higham2002accuracy} for further details.

In the standard floating point model, the floating point representation $\fl(x)$ of a real number $x$ is a $(1+p+t)$-bit number:
\begin{equation}
\fl(x) = \pm 2^e \times \left(\tfrac{m_1}{2} + \tfrac{m_2}{2^2} + \cdots + \tfrac{m_t}{2^t}\right),
\end{equation}
where one bit stores the sign of the number $\pm$, $p$ bits are used to store the (integral) exponent $e$ in the so-called ``biased'' format, and $t$ bits $m_1, m_2, \dots, m_t$ are used for the \emph{significand} (also known as the \emph{mantissa}). For all \emph{normalized} numbers it holds that:
\begin{equation}
\fl(x) = (1+\theta)x,\quad 
|\theta| \leq 2^{-t}:=\umach,
\end{equation}
where $\umach$ is referred to as the \emph{machine precision}. Normalized numbers are all numbers that can be represented within the given exponent range. Given two (normalized) floating point numbers $a,b$, floating point operations $\circ \in \{+,-,\times,/\}$ and square roots are assumed to satisfy:
\begin{equation}
\fl(a \circ b) = (1+\theta)(a \circ b),
\qquad
\text{and}
\qquad
fl\left(\sqrt{a}\right) = (1+\theta)\sqrt{a}.
\end{equation}

\section{Deferred proofs}

\subsection{Proof of Corollary \ref{corollary:condition_number_upper_bound}}
\label{appendix:proof_of_corollary_condition_number_upper_bound}
\begin{proof}
    It holds that $\max_{i,j} |\matA_{i,j}|\leq 1$, and from \cite{kexuefm-11563} we know that $\max_{i,j} |\matA_{i,j}^{-1}|\leq 1$. For any matrix $\matM$, it holds that $\norm{\matM} \leq n\cdot \max_{i,j}|\matM_{i,j}|$, which implies that $\|\matA\|_2\leq n$ and $\norm{\matA^{-1}}\leq n$, and thus $\kappa(\matA)=\norm{\matA}\|\matA^{-1}\|_2\leq n^2$.
\end{proof}

\subsection{Proof of Lemma \ref{lemma:stability_of_column_sweep}}
\label{appendix:proof_of_lemma_stability_of_column_sweep}
\begin{proof}
        \cite{sameh1977solving} proved that, for any vector $\vecb$, the returned solution $\widetilde \vecx$ for the linear system $\matA\vecx=\vecb$ satisfies $(\matA+\matE)\widetilde\vecx=\vecb$, where $\|\matE\|_{\infty} \leq C\cdot n^2\log(n) \cdot \kappa^2(\matA) \cdot \|\matA\|_{\infty}\cdot \umach$. Since we apply the algorithm for every column of the identity as $\vecb$, we can \cite[Theorem 7.4]{higham2002accuracy}, which implies that $\left\|
        \matAtilde^{-1}-\matA^{-1}
        \right\|_{\infty} 
        \leq 
        \xi\cdot \|\matA^{-1}\|_{\infty},$ where $\xi:=\frac{\|\matA^{-1}\|_{\infty}\|\matE\|_{\infty}}{1-\|\matA^{-1}\|_{\infty}\|\matE\|_{\infty}}$. Using standard norm inequalities we can finally argue that
        \begin{align*}
            \left\|\widetilde \matA^{-1} - \matA^{-1}\right\|_2
            &\leq \sqrt{n} \left\|
        \matAtilde^{-1}-\matA^{-1}
        \right\|_{\infty} \\
        &\leq 
        \frac{\|\matA^{-1}\|_{\infty}\|\matE\|_{\infty}}{1- \|\matA^{-1}\|_{\infty}\|\matE\|_{\infty}} \cdot \|\matA^{-1}\|_{\infty}
        \\
        &\leq 
        C'\cdot\|\matA^{-1}\|_{\infty}\cdot n^2\log(n) \cdot \kappa^2(\matA) \cdot \|\matA\|_{\infty}\cdot \umach \cdot \|\matA^{-1}\|_{\infty}
        \\
        &\leq 
        C'\cdot n^{3.5}\log(n) \cdot \kappa^3(\matA) \cdot  \umach \cdot \|\matA^{-1}\|_{2},
        \end{align*}
        where the third step requires the assumption that the machine precision is sufficiently small, so that following bound needs to be satisfied: $\|\matA^{-1}\|_{\infty} \|\matE\|_{\infty} \ll 1$.
    \end{proof}

\subsection{Proof of MBH complexity}
\label{appendix:proof_of_lemma_unrolled_mbh_derivation}
\begin{lemma}
    \label{lemma:unrolled_mbh_derivation}
    Let $\matA=\matI+\matL$ be an $n\times n$ matrix where $\matL$ is strictly lower triangular. We can compute the inverse of $\matA$ using $2\log(n)$ matrix products of size $n\times n$, by unrolling the  algorithm of \cite{bunch1974triangular}.
    \begin{proof}
        Assume for simplicity that $n$ is a power of $2$, and let $b\in \{2, 4, 8, \ldots, n/2\}$. We define the matrix
        \begin{equation}
        \matD(\matA,b) =
        \begin{pmatrix}
        \matA_{0,0} & 0 & \cdots & 0 & 0 \\
        0 & \matA_{1,1} & \cdots & 0 & 0 \\
        \vdots &  & \ddots & & \vdots \\
        0 & 0 & \cdots & 0 & \matA_{n/b-1,n/b-1}
        \end{pmatrix}.
        \end{equation}
        
        Here, $\matD(\matA,b)$ contains only the diagonal blocks of the matrix $\matA$. Each block $\matA_{i,i}$ has size $b \times b$, and there are $n/b$ blocks in total. Let also $\matD_{e}(\matA,b)$ be the matrix containing the ``even'' diagonal blocks of $\matD(\matA,b)$, and  $\matD_{o}(\matA,b)$ the odd blocks of $\matD(\matA,b)$. Clearly, $\matD_{o}(\matA,b) + \matD_{e}(\matA,b) = \matD(\matA,b)$. Now, we work as follows. First, we initialize $\matX = \matI$ and $b=1.$ Then, while $b<n$, we execute the following three steps:
        \begin{enumerate}
        \item Create $\matD_{e}(\matX,b)$ and $\matD_{o}(\matX,b)$ as defined above.
        \item Update $
        \matX \leftarrow \matD_{e}(\matX,b) + \matD_{o}(\matX,b) - \matD_{o}(\matX,b) \cdot \matL \cdot \matD_{e}(\matX,b).
        $
        \item $b\leftarrow 2b.$
        \end{enumerate}
        Finally, we return $\matX.$ In total there are two matrix products of size $n\times n$ for each $b=1,2,4,\ldots,n/2$, which gives a total of $2\log(n)$ matrix products.

        Correctness can be verified by induction. Note that, for $b=1$, after step 2., the $2\times 2$ diagonal blocks of $\matX$ contain the inverses of the $2\times 2$ diagonal blocks of $\matA$. Since $\matX=\matI$ when $b=1$, we can write:
        \begin{align*}
            \matD_e(\matX,1) = \begin{pmatrix}
                1 & 0 & 0 & 0 & 0 &\ldots & 0 & 0 \\
                0 & 0 & 0 & 0 & 0 &\ldots & 0 & 0 \\
                0 & 0 & 1 & 0 & 0 &\ldots & 0 & 0 \\
                0 & 0 & 0 & 0 & 0 &\ldots & 0 & 0 \\
                0 & 0 & 0 & 0 &1 &\ldots & 0 & 0 \\
                 &  & \vdots &  & & \ddots &  &  \\
                0 & 0 & 0 & 0 & 0& \ldots & 1 & 0 \\
                0 & 0 & 0 & 0 & 0& \ldots & 0 & 0 \\
            \end{pmatrix},
            \quad
            \matD_o(\matX,1) = \begin{pmatrix}
                0 & 0 & 0 & 0 & 0 &\ldots & 0 & 0 \\
                0 & 1 & 0 & 0 & 0 &\ldots & 0 & 0 \\
                0 & 0 & 0 & 0 & 0 &\ldots & 0 & 0 \\
                0 & 0 & 0 & 1 & 0 &\ldots & 0 & 0 \\
                0 & 0 & 0 & 0 & 0 &\ldots & 0 & 0 \\
                 &  & \vdots &  & & \ddots &  &  \\
                0 & 0 & 0 & 0 & 0& \ldots & 0 & 0 \\
                0 & 0 & 0 & 0 & 0& \ldots & 0 & 1 \\
            \end{pmatrix},
        \end{align*}
        and it is easy to verify that the elements of $\matI-\matD_o(\matX,1) \cdot \matL \cdot \matD_e(\matX,1)$ are equal to:
        \begin{align*}
            \left(\matI-\matD_o(\matX,1) \cdot \matL \cdot \matD_e(\matX,1)\right)_{i,j}
            =
            \begin{cases}
                1, &i=j,\\
                -\matL_{i,j}, & j<i,  i=2k, j=2k-1, k=\{1,2,\ldots,n/2\},\\
                0, &\text{otherwise}.
            \end{cases}
        \end{align*}
        This gives us the base case for the induction. We next argue that, if $\matX$ after step 2. contains the inverses of the diagonal blocks of $\matA:=\matI+\matL$ of size $2b\times 2b$ for any $b=1,2,4,\ldots,n/4$, then the same holds for $2b$. By the induction hypothesis, at step 1., $\matD_e(\matX,2b)$ contains the inverses of the even diagonal $2b\times 2b$ blocks of $\matA$, and $\matD_o(\matX,2b)$ the odd blocks. Therefore they can be written as:
        \begin{align*}
            \matD_e(\matX,2b) &= \begin{pmatrix}
                \matA_{1,1}^{-1} &  &  &  &  & &  &  \\
                 & 0_{2b\times 2b} &  &  &  & &  &  \\
                 &  & \matA_{3,3}^{-1} &  & & &  &  \\
                 &  &  & 0_{2b\times 2b} &  & &  &  \\
                 &  &  &  &\matA_{5,5}^{-1} & &  &  \\
                 &  &  &  & & \ddots &  &  \\
                 &  &  &  & & & \matA_{\frac{n}{2b}-1,
                \frac{n}{2b}-1}^{-1} &  \\
                 &  &  &  & &  &  & 0_{2b\times 2b} \\
            \end{pmatrix},
            \end{align*}
            and
            \begin{align*}\\
            \matD_o(\matX,2b) &= \begin{pmatrix}
                0_{2b\times 2b} &  &  &  &  & &  &  \\
                &\matA_{2,2}^{-1} &  &  &  &  & &   \\
                & & 0_{2b\times 2b} &  &  &  & &   \\
                & &  & \matA_{4,4}^{-1} &  & & &   \\
                & &  &  & 0_{2b\times 2b} &  & &   \\
                & &  &  &  & \ddots&  &   \\
                & &  &  &  & & 0_{2b\times 2b} &  \\
                & &  &  &  & & & \matA_{\frac{n}{2b},
                \frac{n}{2b}}^{-1}  \\
            \end{pmatrix}
        \end{align*}
         Then, the matrix $\matD_{e}(\matX,2b) + \matD_{o}(\matX,2b)-\matD_o(\matX,2b) \cdot \matL \cdot \matD_e(\matX,2b)$ has the form:
         \begin{align*}
             \begin{pmatrix}
                    \matA_{1,1}^{-1} 
                    &  
                    &  
                    &  
                    & 
                    & 
                    &  
                \\
                    -\matA_{2,2}^{-1}\matL_{2,1}\matA_{1,1}^{-1}
                    &\matA_{2,2}^{-1} 
                    & 
                    & 
                    & 
                    & 
                    & 
                \\
                    0_{2b\times 2b}
                    & 0_{2b\times 2b}
                    & \matA_{3,3}^{-1} 
                    & 
                    & 
                    &  
                    & 
                \\
                    -\matA_{4,4}^{-1}\matL_{4,1}\matA_{1,1}^{-1}
                    & 0_{2b\times 2b}
                    & -\matA_{4,4}^{-1}\matL_{4,3}\matA_{3,3}^{-1}  
                    & \matA_{4,4}^{-1} 
                    & 
                    &  
                    & 
                \\
                    0_{2b\times 2b}
                    & 0_{2b\times 2b} 
                    &  0_{2b\times 2b} 
                    &  0_{2b\times 2b} 
                    & \matA_{5,5}^{-1} 
                    &  
                    &
                \\
                -\matA_{6,6}^{-1}\matL_{6,1}\matA_{1,1}^{-1} 
                    & 0_{2b\times 2b} 
                    &  -\matA_{6,6}^{-1}\matL_{6,3}\matA_{3,3}^{-1}  
                    &  0_{2b\times 2b} 
                    &  -\matA_{6,6}^{-1}\matL_{6,5}\matA_{5,5}^{-1}
                    & -\matA_{6,6}^{-1}
                    &
                \\
                    \vdots
                    & \vdots
                    & 
                    & 
                    & 
                    & 
                    & 
                    \ddots  
                \\
            \end{pmatrix}.
         \end{align*}
         Based on Eq. \eqref{eq:recursive_bunch_hopcroft}, each $4b\times 4b$ diagonal block of the previous matrix contains the inverse of the corresponding $4b\times 4b$ diagonal block of $\matA$. This completes the induction step and concludes the proof. 
    \end{proof}
\end{lemma}

\subsection{Proof of Proposition \ref{proposition:stability_of_ns}}
\label{appendix:proof_of_proposition_stability_of_ns}
\begin{proof}
        The authors in \cite{pan1985efficient} showed that, if $\matA$ is invertible and $\matX_0$ satisfies $\|\matI-\matA\matX_0\|< q:=1-1/n^{O(1)}$, then, in exact arithmetic, after $O(\log(n))$ iterations, the Newton-Schulz method of Eq. \eqref{eq:newton_schulz} returns an approximate inverse
    $\matAhat^{-1}$ which satisfies:
    \begin{align}
        \left\|\matAhat^{-1} - \matA^{-1}\right\|_2\leq  2^{-n^{O(1)}+1}\cdot \left\|\matA^{-1}\right\|_2.
        \label{eq:newton_schulz_convergence_exact_arithmetic}
    \end{align} Now let $\matE:=\matX_0-\matA^{-1}$. We have that $\|\matE\|_2=\|\matA^{-1}\matA\matX_0-\matA^{-1}\|_2\leq\|\matA^{-1}\|_2\cdot \|\matA\matX_0-\matI\|_2\leq q\|\matA^{-1}\|_2<\|\matA^{-1}\|_2$. Thus $\|\matX_0\|_2=\|\matA^{-1}+\matE\|_2\leq (1+q)\|\matA^{-1}\|_2<2\|\matA^{-1}\|_2$.
    By the stability assumption, we have that the returned floating point matrix satisfies:
    \begin{align*}
        \|\matAtilde^{-1}-\matAhat^{-2}\|_2\leq \umach\cdot n^{O(1)}\cdot \|\matA^{-1}\|_2.
    \end{align*}
    Then the triangle inequality provides a coarse upper bound:
    \begin{align*}
        \|\matAtilde^{-1}-\matA^{-1}\|_2 &=
            \|\matAtilde^{-1}-\matAhat^{-1}+\matAhat^{-1}-\matA^{-1}\|_2\leq 2\umach\cdot n^{O(1)}\cdot \|\matA^{-1}\|_2
            +
            2^{-n^{O(1)}+1}\cdot \left\|\matA^{-1}\right\|_2.
    \end{align*}
    For reasonable values of $n$ and for $\umach\leq \epsilon/n^{O(1)}$, which translates to $\log(1/\epsilon)+O(\log(n))$ bits of precision, the previous inequality gives $\|\matAtilde^{-1}-\matA^{-1}\|_2\leq \epsilon \|\matA^{-1}\|_2$.

    Note that for triangular matrices that satisfy Corollary \ref{corollary:condition_number_upper_bound}, using $\matX_0=\matI/n^{O(1)}$ is a simple and efficient initial guess that satisfies all the required properties.
    \end{proof}

\subsection{Stability of MCH}
Here we prove that, even though MCH is generally unstable, the floating point errors can be still bounded by some (very large) quantities.

\begin{lemma}
    Let $\matL\in\mathbb{R}^{n\times n}$ be a strictly lower triangular matrix, and let $\matA:=\matI+\matL$. If the machine precision $\umach$ satisfies $\umach\leq \umach_{MCH}:= \frac{1}{2^n\cdot\mu(n)} $, then Alg. \ref{alg:mch} (MCH) returns a matrix $\matAtilde^{-1}$ such that:
    \begin{align*}
        \left\|\matA^{-1}-\matAtilde^{-1}\right\|_2 \leq \umach \cdot \mu''(n) \cdot \left(
            \phi(\log(\tfrac{n}{2})) + \sum_{j=1}^{\log(\tfrac{n}{2})-1} \phi(j) \cdot \frac{\psi(\log(\tfrac{n}{2}),2,0)}{\psi(\log(\tfrac{n}{2}),2,0)}
        \right), 
    \end{align*}
    where $\mu(n),\mu''(n)$ are low-degree polynomials in $n$, $\psi(k,c,d):=\prod_{i=1}^k (1+(c\cdot \|\matL\|_2)^{2^{i+d}})$, and  $\phi(k):=\left[
        \psi(k,1,0)
        \cdot (2\|\matL\|_2)^{2^k} 
        +
    (1+\|\matL\|_2)  \cdot 2^{2^k} \cdot 
    \psi(k,2,1)
    \cdot 
    \left(
        1 +  (2 \|\matL\|_2)^{2^{k+1}}
    \right)
    \right]$.
    \label{lemma:stability_of_mch}

    \begin{proof}
        
        Let us first bound the errors for the repeated squaring part, which is completely independent from the computation of the inverse. That is, we focus on lines 3 and 4 of Algorithm \ref{alg:mch}. Before we start the analysis, we state some preliminaries and define notation. Using standard bounds on floating point analysis of matrix multiplication, we assume access to an algorithm that computes a matrix $\matCtilde$ that approximates the product $\matA$,$\matB$ such that:
        \begin{align}
            \|\matCtilde-\matA\matB\|_2\leq \mu(n)\cdot \|\matA\|_2\cdot \|\matB\|_2 \cdot \umach,
            \label{eq:matmul_floating_point_error_bound}
        \end{align}
        where $\mu(n)$ is some low-degree polynomial in $n$.
        
        We also make the following explicit assumption on $\umach$ which is necessary for our analysis to yield the advertised bounds:
        \begin{align}
            \umach \leq \frac{1}{2^n\cdot \mu(n)}.
            \label{eq:assumed_bound_on_umach_mch}
        \end{align}
        
        Now, let us define $\matYtilde_k$ to be the matrix that is being computed numerically at iteration $k=1,2,\ldots,\log(n/2)$, and let $\matY_k=\matY^{2^k}$ be the true matrix at iteration $k$ if the computation was carried out in infinite precision. 
        
        From Eq. \eqref{eq:matmul_floating_point_error_bound}, $\matYtilde_k$ satisfies:
        \begin{align*}
            \matYtilde_k = \matYtilde_{k-1}^2 + \matE^{MM}_k,
        \end{align*}
        where $\matE^{MM}_k$ is an error matrix that arises due to floating point errors, and it satisfies $\|\matE^{MM}_k\|_2\leq \mu(n)\cdot \|\matYtilde_{k-1}^2\|_2 \cdot \umach$.
        \begin{align*}
            \|\matYtilde_k\|_2 &= \|\matYtilde_{k-1}^2 + \matE^{MM}_k\|_2 \\
                &\leq \|\matYtilde_{k-1}\|_2^2 + \|\matE^{MM}_k\|_2 \\
                &\leq \|\matYtilde_{k-1}\|_2^2 + \mu(n)\cdot \|\matYtilde_{k-1}^2\|_2 \cdot \umach \\
                &= \|\matYtilde_{k-1}\|_2^2 \cdot (1+\mu(n)\cdot\umach) \\
                &\leq 2\cdot \|\matYtilde_{k-1}\|_2^2,
        \end{align*}
        where in the last step we used Inequality \eqref{eq:assumed_bound_on_umach_mch}. 
        Recall that for $k=1$, we have that $\|\matYtilde_1\|_2 \leq \|\matY_0\|_2^2$, where $\matY_0$ is the initial matrix. Using this as a base case and solving the recursion, it follows that:
        \begin{align}
            \|\matYtilde_{k}\|_2 \leq 2\cdot 2^2 \cdot 2^4\cdot \ldots \cdot 2^{2^{k-1}} \cdot \|\matYtilde_1\|_2^{2^k}
            \leq 2^{2^{k}-1} \cdot \|\matY_0\|_2^{2^{k+1}}.
            \label{eq:bound_norm_y_tilde_k}
        \end{align}
        This allows us to directly obtain a bound on the matrix multiplication error at iteration $k$:
        \begin{align*}
           \|\matE^{MM}_k\|_2 &\leq \mu(n)\cdot \|\matYtilde^2_{k-1}\|_2\cdot \umach \\
            &\leq \mu(n)\cdot 2^{2^{k-1}-1} \cdot \|\matY_0\|_2^{2^{k}} \cdot \umach.
        \end{align*}
        Now, let $\matE_k:=\matYtilde_k-\matY_0^{2^k}$ be the total error matrix at iteration $k$. We can write:
        \begin{align*}
            \matY_0^{2^k} + \matE_k &= \matYtilde_{k-1}^2 + \matE^{MM}_k \\
                &= (\matY_0^{2^{k-1}} + \matE_{k-1})^2 + \matE^{MM}_k \\
                &= \matY_0^{2^{k}} + \matY_0^{2^{k-1}}\matE_{k-1} + \matE_{k-1} \matY_0^{2^{k-1}} + \matE_{k-1}^2 + \matE^{MM}_k.
        \end{align*}
        In the equation above, the terms $\matY_0^{2^k}$ on the left and the right hand side cancel out. If we take norms on both sides, we can write that:
        \begin{align}
            \|\matE_k\|_2 &\leq 2\cdot \|\matY_0^{2^{k-1}}\|_2\cdot \|\matE_{k-1}\|_2 + \|\matE_{k-1}\|_2^2 + \|\matE^{MM}_k\|_2 \\
                &\leq 2\cdot \|\matY_0\|_2^{2^{k-1}}\cdot \|\matE_{k-1}\|_2 + \|\matE_{k-1}\|_2^2 + \mu(n)\cdot 2^{2^{k-1}-1} \cdot \|\matY_0\|_2^{2^{k}} \cdot \umach.
            \label{eq:first_bound_on_err_k}
        \end{align}
        We will now prove that $\|\matE_k\|_2\leq 2^{2^{k}} \cdot \|\matY_0\|^{2^{k}} \cdot \mu(n)\cdot \umach$ by induction. 
        For the inductive step, we assume that $\|\matE_{k}\|_2\leq 2^{2^k}\cdot \|\matY_0\|^{2^k}\cdot \mu(n)\cdot \umach$. It trivially holds in the base case $k=1$ from Inequality \eqref{eq:matmul_floating_point_error_bound}. We now prove that if it holds for $k$, then it holds also for $k+1$. Inequality \eqref{eq:first_bound_on_err_k} implies that:
        \begin{align*}
            \|\matE_{k+1}&\|_2 \leq 2\|\matY_0\|_2^{2^k} \cdot \|\matE_k\|_2 + \|\matE_k\|_2^2 + 2^{2^k-1} \cdot \|\matY_0\|_2^{2^{k+1}} \cdot \mu(n) \cdot \umach  \\
            &\leq 2\|\matY_0\|_2^{2^k} \cdot 2^{2^k} \|\matY_0\|_2^{2^k} \mu(n)\cdot \umach + (2^{2^k} \|\matY_0\|_2^{2^k} \mu(n)\cdot \umach)^2 + \mu(n) \cdot 2^{2^k-1} \|Y_0\|_2^{2^{k+1}} \cdot \umach \\
            &= \|\matY_0\|_2^{2^{k+1}}\cdot\mu(n)\cdot \umach \cdot (2\cdot2^{2^{k}}+2^{2^{k+1}}\cdot\mu(n)\cdot \umach+2^{2^k-1}) \\
            &\leq \|\matY_0\|_2^{2^{k+1}}\cdot\mu(n)\cdot \umach \cdot (2\cdot2^{2^{k}}+2^{2^{k}}+2^{2^k-1}) \qquad\text{(due to Ineq. \ref{eq:assumed_bound_on_umach_mch})}\\
            &= \frac{7}{2}\cdot2^{2^k} \cdot\|\matY_0\|_2^{2^{k+1}}\cdot\mu(n)\cdot \umach \\
            &\leq 2^{2^{k+1}}\cdot \|\matY_0\|_2^{2^{k+1}}\cdot\mu(n)\cdot \umach,
        \end{align*}
        where the last holds for all $k\geq 2$.
        
        Next, we shift the attention to the computation of the final $\matX$ which approximates the inverse. Let $\matXtilde_k$ be the matrix computed at iteration $k\in\{1,\ldots,\log(n/2)\}$ in Algorithm \ref{alg:mch}, line 5, given by the formula $\matXtilde_k\gets \matXtilde_{k-1}+\matXtilde_{k-1}\cdot \matYtilde_k$, where $\matXtilde_0:=\matI-\matL$.
        
        We recall that for the element-wise floating point addition between two matrices $\matA$ and $\matB$ the returned matrix $\matCtilde=\fl(\matA+\matB)$ satisfies:
        \begin{align}
            \|\matCtilde - (\matA+\matB)\|_2 \leq \umach\cdot \sqrt{n}\cdot \|\matA+\matB\|_2.
            \label{eq:floating_point_addition_error_bound}
        \end{align}
        
        Using this we can calculate a bound on the error of $\matXtilde_k$:
        \begin{align*}
            \matXtilde_k = \matXtilde_{k-1} + \matXtilde_{k-1}\cdot \matYtilde_k + \matF^{GEMM}_k,
        \end{align*}
        where $\matF^{GEMM}_k$ denotes the total error of the ``GEMM'' operation $\matXtilde_{k-1} + \matXtilde_{k-1}\cdot \matYtilde_k$. Let $\matZtilde_k:=\fl(\matXtilde_{k-1}\cdot\matYtilde_k)=\matXtilde_{k-1}\cdot\matYtilde_k + \matF^{MM}_k$, where $\matF^{MM}_k$  is the error of the multiplication $\matXtilde_{k-1}\cdot \matYtilde_k$. It satisfies $\|\matF^{MM}_k\| \leq \mu(n)\cdot \umach \cdot \|\matXtilde_{k-1}\|_2\cdot \|\matYtilde_k\|_2$, which implies that $\|\matZtilde_k\|_2 \leq \|\matXtilde_{k-1}\|_2\cdot \|\matYtilde_k\|_2\cdot (1+\mu(n)\cdot \umach) $. For the subsequent addition, we have that:
        \begin{align*}
            \fl(\matXtilde_{k-1} + \matZtilde_k) &:= \matXtilde_{k-1} + \matZtilde_k + \matF^{ADD}_k \\
                &= \matXtilde_{k-1} +\matXtilde_{k-1}\cdot\matYtilde_k + \matF^{MM}_k + \matF^{ADD}_k,
        \end{align*}
        where, from Inequality \eqref{eq:floating_point_addition_error_bound}, it holds that:\begin{align*}
        \|\matF^{ADD}_k\|_2 &\leq \umach \cdot \sqrt{n} \cdot \|\matXtilde_{k-1} + \matZtilde_k\|_2\\
            &\leq \umach \cdot \sqrt{n} \cdot \left(
                \|\matXtilde_{k-1}\|_2 + \|\matXtilde_{k-1}\|_2\cdot \|\matYtilde_k\|_2\cdot (1+\mu(n)\cdot \umach)
            \right) \\
            &= \umach \cdot \sqrt{n} \cdot \|\matXtilde_{k-1}\|_2 \cdot 
            \left(
                1 + \|\matYtilde_k\|_2\cdot (1+\mu(n)\cdot \umach)
            \right).
        \end{align*}
        This finally gives:
        \begin{align*}
            \|\matF^{GEMM}_k&\|_2 = \|\matF^{ADD}_k + \matF^{MM}_k\|_2 \\
                &\leq 
                \mu(n)\cdot \umach \cdot \|\matXtilde_{k-1}\|_2\cdot \|\matYtilde_k\|_2
                +
                \umach \cdot \sqrt{n} \cdot \|\matXtilde_{k-1}\|_2 \cdot 
                \left(
                    1 + \|\matYtilde_k\|_2\cdot (1+\mu(n)\cdot \umach)
                \right)\\
                &\leq 
                \mu'(n)\cdot \umach \cdot \|\matXtilde_{k-1}\|_2
                \cdot 
                \left(
                        1 + \|\matYtilde_k\|_2\cdot (2+\mu(n)\cdot \umach)
                \right) \\
                &\leq 
                \mu'(n)\cdot \umach \cdot \|\matXtilde_{k-1}\|_2
                \cdot 
                \left(
                        1 + 2^{2^{k}-1} \cdot \|\matY_0\|_2^{2^{k+1}}\cdot (2+\mu(n)\cdot \umach)
                \right)
                \\
                &\leq 
                \mu'(n)\cdot \umach \cdot \|\matXtilde_{k-1}\|_2
                \cdot 
                \left(
                        1 + 2^{2^{k}+1} \cdot \|\matY_0\|_2^{2^{k+1}}
                \right),
        \end{align*}
        where $\mu'(n)$ (similar to $\mu(n)$) is polynomial in $n$. We can get a bound on $\|\matXtilde_{k}\|_2$ in a similar way that we did with $\|\matYtilde_k\|_2$. We have that:
        \begin{align}
            \|\matXtilde_{k}\|_2 &= \|\matXtilde_{k-1}+\matXtilde_{k-1}\matYtilde_{k} + \matF^{GEMM}_k\|_2
                \nonumber\\
                &\leq \|\matXtilde_{k-1}\|_2 \cdot \left(
                    1+\|\matYtilde_k\|_2 + \mu'(n)\cdot \umach 
                    \cdot 
                    \left(
                            1 + 2^{2^{k}+1} \cdot \|\matY_0\|_2^{2^{k+1}}
                    \right)
                \right)
                \nonumber\\
                &\leq \|\matXtilde_{k-1}\|_2 \cdot 
                    \left(
                        1 + 2^{2^{k}+1} \cdot \|\matY_0\|_2^{2^{k+1}}
                    \right)
                    \cdot
                    \left(
                        1+\mu'(n)\cdot \umach 
                    \right)
                \nonumber\\
                &\leq 2\cdot\|\matXtilde_{k-1}\|_2 \cdot 
                    \left(
                        1 + 2^{2^{k}+1} \cdot \|\matY_0\|_2^{2^{k+1}}
                    \right)
                \nonumber\\
                &\leq \|\matX_0\|_2 
                    \cdot
                    2^{2^k}
                    \cdot
                    \prod_{i=1}^k
                        \left(
                            1 + 2^{2^{i}+1} \cdot \|\matY_0\|_2^{2^{i+1}}
                        \right)
                \nonumber\\
                &\leq \|\matX_0\|_2 
                    \cdot
                    2^{2^k}
                    \cdot
                    \prod_{i=1}^k
                        \left(
                            1 + (2\cdot \|\matY_0\|_2)^{2^{i+1}}
                        \right)\\
                &\leq (1+\|\matY_0\|_2) 
                    \cdot
                    2^{2^k}
                    \cdot
                    \prod_{i=1}^k
                        \left(
                            1 + (2\cdot \|\matY_0\|_2)^{2^{i+1}}
                        \right).
                \label{eq:bound_on_x_tilde_k}
        \end{align}
        
        Finally, we can consider the final error of the algorithm. Let us write $
            \matXtilde_k = \matX_k + \matF_k,$
        where now $\matF_k$ contains all the errors of the final returned matrix. By using the same methodology as before with $\matY_k$ and $\matE_k$, for all $k=1,\ldots,\log(n/2)$ we can write:
        \begin{align*}
            \matX_k + \matF_k &= (\matX_{k-1}+\matF_{k-1}) + (\matX_{k-1}+\matF_{k-1})(\matY_k+\matE_k) + \matF^{GEMM}_k
            \\
                &=\matX_{k-1}+\matF_{k-1} + \matX_{k-1}\matY_{k} + \matX_{k-1}\matE_k + \matF_{k-1}\matY_k + \matF_k\matE_k + \matF^{GEMM}_k,
        \end{align*}
        where $\matF_0=0$. This gives:
        \begin{align}
            \matF_k &= \matF_{k-1} + \matX_{k-1}\matE_k + \matF_{k-1}\matY_k + \matF_k\matE_k + \matF^{GEMM}_k \nonumber
                \\
                &= \matF_{k-1}\left(\matI + \matY_k + \matE_k\right) + \matX_{k-1}\matE_k + \matF^{GEMM}_k \nonumber
                \\
                &= \matF_{k-1}\left(\matI + \matY_k + \matE_k\right) + \matX_{k-1}\matE_k + \matF^{GEMM}_k.
        \end{align}
        Let us denote $a(k):=\|\matI + \matY_k + \matE_k\|_2 $ and $\beta(k):=\|\matX_{k-1}\matE_k + \matF^{GEMM}_k\|_2$. By taking the norm of $\matF_k$ we arrive to a first-order linear recurrence:
        \begin{align}
            \|\matF_k\|_2 &\leq \|\matF_{k-1}\|_2\cdot a(k) + \beta(k) \leq \beta(k) + \sum_{j=1}^{k-1} \beta(j) \cdot\prod_{i=j+1}^{k}a(i)
            \label{eq:bound_on_matf_k_initial}
        \end{align}
        Let us now bound the terms $a(k)$ and $\beta(k)$. We have that:
        \begin{align*}
            a(k) &= \|\matI + \matY_k + \matE_k\|_2 \\
                &\leq \left(1 + \|\matY_0\|_2^{2^k} + 2^{2^k}\cdot \|\matY_0\|_2^{2^k}\cdot \mu(n)\cdot \umach\right)
                \\
                &\leq 1 + 2\|\matY_0\|_2^{2^k}.
        \end{align*}
        
        Let $\psi(k,c,d):=\prod_{i=1}^k (1+(c\cdot \|\matY_0\|_2)^{2^{i+d}})$.
        The products of $a(k)$'s can be bounded as:
        \begin{align*}
             \prod_{i=1}^k a(i) &\leq
                \prod_{i=1}^k (1 + 2\|\matY_0\|_2^{2^k})
                \leq
                \prod_{i=1}^k (1 + (2\cdot \|\matY_0\|_2)^{2^k})
                =
                \psi(k,2,0).
        \end{align*}
        For $\beta(k)$, we have that:
        \begin{align}
            \beta(k) &= \|\matX_{k-1}\matE_k + \matF^{GEMM}_k\|_2 \nonumber\\
                &\leq \|\matX_{k-1}\|_2\cdot \|\matE_k\|_2 + \|\matF^{GEMM}_k\|_2 \nonumber\\
                &\leq \|\matX_{k-1}\|_2\cdot 2^{2^k}\cdot \|\matY_0\|_2^{2^k}\cdot \mu(n)\cdot \umach  +
                \mu'(n)\cdot \umach \cdot \|\matXtilde_{k-1}\|_2
                \cdot 
                \left(
                        1 + 2^{2^{k}+1} \cdot \|\matY_0\|_2^{2^{k+1}}
                \right)
                \nonumber\\
                &\leq 
                \mu''(n)\cdot \umach
                \cdot 
                \left[
                    \|\matX_{k-1}\|_2\cdot 2^{2^k}\cdot \|\matY_0\|^{2^k} 
                    +
                 \|\matXtilde_{k-1}\|_2
                \cdot 
                \left(
                        1 + 2^{2^{k}+1} \cdot \|\matY_0\|_2^{2^{k+1}}
                \right)
                \right],
                \label{eq:initial_bound_on_beta_k}
        \end{align}
        where $\mu''(n)=\max(\mu(n),\mu'(n))$. To proceed further, we bound:
        \begin{align}
            \|\matX_{k-1}\|_2 &= \|\matX_{k-1}+\matX_{k-1}\matY_{k}\|_2 \nonumber\\
                &\leq \|\matX_{k-1}\|\cdot \|\matI+\matY_{k}\|_2 \nonumber\\
                &\leq \|\matX_{k-1}\|\cdot (1+\|\matY_0\|_2^{2^k}) \nonumber\\
                &\leq \prod_{i=1}^k (1+\|\matY_0\|_2^{2^i}) \nonumber \\
                &= \psi(k,1,0).
                \label{eq:bound_on_x_k}
        \end{align}
        Combining Inequalities \eqref{eq:bound_on_x_tilde_k}, \eqref{eq:initial_bound_on_beta_k}, and \eqref{eq:bound_on_x_k}, we obtain:
        \begin{align}
            \beta&(k) \leq 
                    \mu''(n)\cdot \umach
                    \cdot 
                    \left[
                        \|\matX_{k-1}\|_2\cdot 2^{2^k}\cdot \|\matY_0\|_2^{2^k} 
                        +
                     \|\matXtilde_{k-1}\|_2
                    \cdot 
                    \left(
                            1 + 2^{2^{k}+1} \cdot \|\matY_0\|_2^{2^{k+1}}
                    \right)
                    \right]
                    \nonumber\\
                &\leq 
                    \mu''(n)\cdot \umach
                    \cdot 
                    \left[
                        \psi(k,1,0)
                        \cdot (2\|\matY_0\|_2)^{2^k} 
                        +
                    (1+\|\matY_0\|_2)  \cdot 2^{2^k} \cdot 
                    \psi(k,2,1)
                    \cdot 
                    \left(
                            1 +  (2 \|\matY_0\|_2)^{2^{k+1}}
                    \right)
                    \right]
                    \nonumber
                \\
                &= 
                    \mu''(n)\cdot \umach
                    \cdot 
                    \phi(k),
        \end{align}
        where $\phi(k):=\left[
            \psi(k,1,0)
            \cdot (2\|\matY_0\|_2)^{2^k} 
            +
        (1+\|\matY_0\|_2)  \cdot 2^{2^k} \cdot 
        \psi(k,2,1)
        \cdot 
        \left(
                1 +  (2 \|\matY_0\|_2)^{2^{k+1}}
        \right)
        \right]$.
        Then the main error of Inequality \eqref{eq:bound_on_matf_k_initial} becomes:
        \begin{align}
            \|\matF_k\|_2 &\leq \mu''(n) \cdot \umach \cdot \left(
                \phi(k) + \sum_{j=1}^{k-1} \phi(j) \cdot \frac{\psi(k,2,0)}{\psi(j,2,0)}
            \right).
        \end{align}
    \end{proof}
\end{lemma}

\section{Additional experiments and implementation details}
\subsection{Parallelization and asynchronous execution}
To achieve the best possible performance, we highlight some high-level practices. First, note that, in existing implementations of linear transformers architectures, the triangular inversion part is inherently massively parallel. Typically, the input is given as a high-dimensional tensor with numerous small-sized chunks, which are all completely independent, and can be inverted without communication. However, internally, within the inversion algorithms, there are data dependencies between each iteration (or recursion level, for the recursive ones). The effect of these dependencies can be mitigated to a minimal level by pipelining.

As an example, in the repeated-squaring algorithm (MCH), Step 1. depends on the corresponding $\matY$ matrix from the previous iteration, while Step 2. expects as an input the output of Step 1. However, the update of $\matY$ in iteration $k+1$ is independent of the update of $\matX$ in iteration $k$, and therefore these two operations can be overlapped, if enough matrix cores are available. Similar observations can be made for the other algorithms as well.
At the ``I/O level'', since individual chunks are independent, it is possible to overlap the inversion of a chunk while simultaneously loading the next chunk from the main (slow) memory to the local memory.

\subsection{Stability analysis on different types of matrices}
Here we add some additional experiments regarding the numerical accuracy of the methods on different classes of matrices. In Figure \ref{fig:inverse_accuracy_comparison_kernel_level_decay} we provide the same experiment as in Figure \ref{fig:inverse_accuracy_comparison_kernel_level}, where the strictly lower-triangular part of the input matrix is additionally scaled by a decay factor. As expected, this scaling benefits the accuracy of all algorithms. MXR, in particular, gains 1 or 2 digits of precision for each data-type and matrix size.
\begin{figure}[htb]
    \centering
    \includegraphics[width=\linewidth]{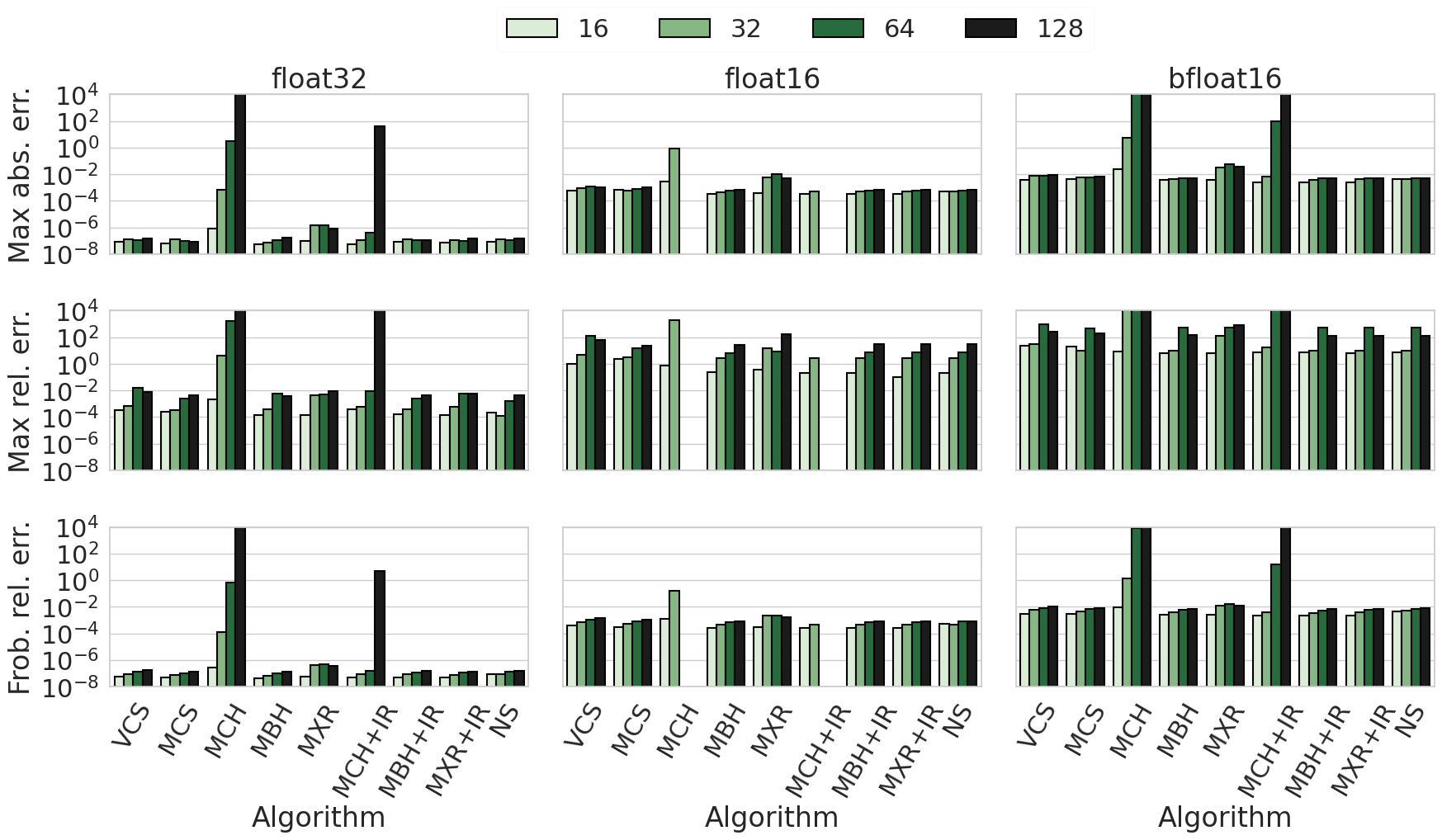}
    \caption{Comparison of the inverse accuracy of various methods, for three different input precision formats, and four matrix sizes.}
    \label{fig:inverse_accuracy_comparison_kernel_level_decay}
\end{figure}

\subsection{MMLU Benchmarks}
\label{appendix:mmlu_benchmarks}
Massive Multitask Language Understanding (MMLU)~\cite{hendryckstest2021} is a widely used benchmark for evaluating large language models. It consists of thousands of multiple-choice questions spanning 57 subjects. In this work, we use only a subset of these subjects to speed up evaluation. This is justified because our goal is not to assess the absolute capabilities of a model, but rather to perform a relative comparison that highlights how algorithmic accuracy affects end-to-end performance.

Figure \ref{fig:inverse_accuracy_comparison_extended} illustrates the average accuracy obtained on the MMLU benchmark, for 10 different subjects, for three different models with different inverse algorithms. All algorithms in Figure \ref{fig:inverse_accuracy_comparison_extended} are on-par with the baseline accuracy. This figure complements Figure \ref{fig:inverse_accuracy_comparison_kernel_level}, which already show-cased that the unstable algorithm MCH can lead to catastrophic accuracy drop, when used naively.
\begin{figure}[htb]
    \centering
    \includegraphics[width=0.8\linewidth]{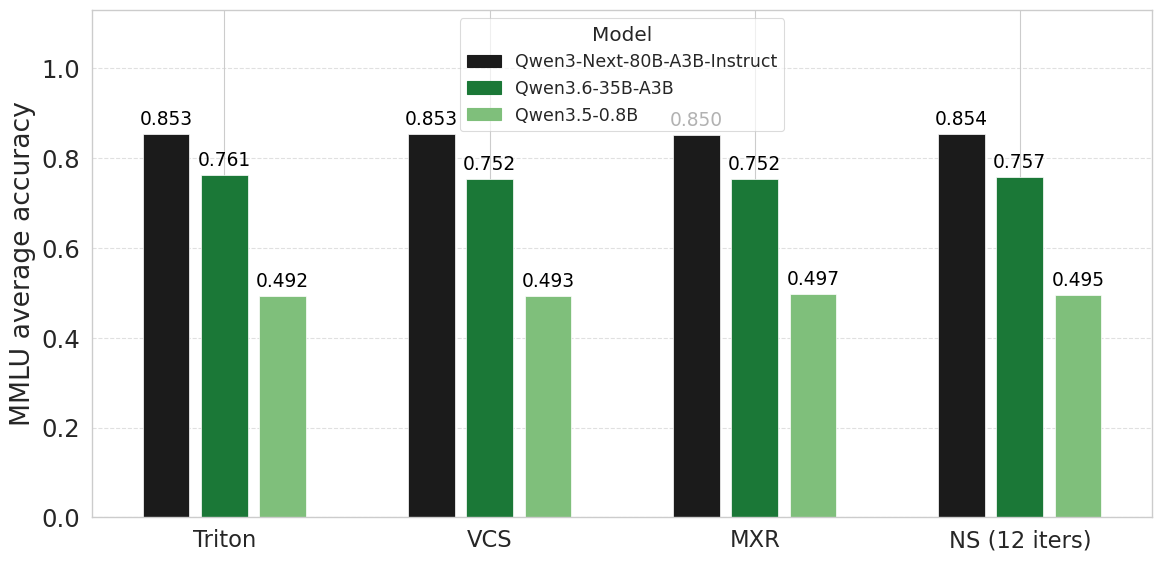}
    \caption{Comparison of the accuracy obtained on MMLU (10-subjects) of different models with different inverse algorithms.}
    \label{fig:inverse_accuracy_comparison_extended}
\end{figure}

In Tab. \ref{tab:mmlu_inv_methods} and 
\ref{tab:mmlu_ns_iters} we report the full results of the MMLU benchmarks summarized in Fig. 
\ref{fig:inverse_accuracy_comparison_extended} and Fig. 
\ref{fig:ns-iterations} respectively:

\clearpage

\begin{table*}[t]
\centering
\setlength{\tabcolsep}{3pt}
\footnotesize
\caption{MMLU accuracy (10-subject subset) across inversion methods and Qwen models.}
\label{tab:mmlu_inv_methods}
\begin{tabular}{llrrrrrrrrrrr}
\toprule
\textbf{Model} & \textbf{Method} & \textbf{Avg.} & \rotatebox{75}{\strut AbsAlg} & \rotatebox{75}{\strut Anatomy} & \rotatebox{75}{\strut Astron} & \rotatebox{75}{\strut BusEth} & \rotatebox{75}{\strut ClinKn} & \rotatebox{75}{\strut ColBio} & \rotatebox{75}{\strut ColChem} & \rotatebox{75}{\strut ColCS} & \rotatebox{75}{\strut ColMath} & \rotatebox{75}{\strut ColMed} \\
\midrule
\multirow{4}{*}{\textbf{Qwen3-80B-A3B}} & Triton & 0.853 & 0.800 & 0.807 & 0.934 & 0.810 & 0.894 & 0.972 & 0.670 & 0.850 & 0.780 & 0.861 \\
 & VCS & 0.853 & 0.800 & 0.815 & 0.934 & 0.810 & 0.894 & 0.972 & 0.670 & 0.840 & 0.770 & 0.867 \\
 & MXR & 0.850 & 0.800 & 0.800 & 0.934 & 0.800 & 0.894 & 0.965 & 0.670 & 0.850 & 0.760 & 0.861 \\
 & NS-12 & 0.854 & 0.800 & 0.807 & 0.941 & 0.810 & 0.891 & 0.972 & 0.680 & 0.860 & 0.770 & 0.861 \\
\midrule
\multirow{4}{*}{\textbf{Qwen3-35B-A3B}} & Triton & 0.761 & 0.480 & 0.830 & 0.737 & 0.850 & 0.823 & 0.944 & 0.470 & 0.770 & 0.680 & 0.803 \\
 & VCS & 0.752 & 0.460 & 0.830 & 0.737 & 0.830 & 0.811 & 0.931 & 0.460 & 0.760 & 0.700 & 0.780 \\
 & MXR & 0.752 & 0.500 & 0.830 & 0.717 & 0.830 & 0.819 & 0.951 & 0.440 & 0.750 & 0.680 & 0.775 \\
 & NS-12 & 0.757 & 0.460 & 0.837 & 0.743 & 0.840 & 0.826 & 0.938 & 0.450 & 0.790 & 0.680 & 0.775 \\
\midrule
\multirow{4}{*}{\textbf{Qwen3.5-0.8B}} & Triton & 0.492 & 0.320 & 0.467 & 0.500 & 0.510 & 0.581 & 0.618 & 0.470 & 0.420 & 0.350 & 0.491 \\
 & VCS & 0.493 & 0.330 & 0.467 & 0.507 & 0.500 & 0.581 & 0.604 & 0.490 & 0.420 & 0.330 & 0.503 \\
 & MXR & 0.497 & 0.330 & 0.452 & 0.507 & 0.540 & 0.600 & 0.590 & 0.490 & 0.440 & 0.310 & 0.503 \\
 & NS-12 & 0.495 & 0.330 & 0.467 & 0.493 & 0.530 & 0.589 & 0.604 & 0.490 & 0.430 & 0.330 & 0.491 \\
\bottomrule
\end{tabular}
\end{table*}

\begin{table*}[t]
\centering
\setlength{\tabcolsep}{3pt}
\footnotesize
\caption{MMLU accuracy (10-subject subset) across Newton-Schulz iteration counts and Qwen models.}
\label{tab:mmlu_ns_iters}
\begin{tabular}{llrrrrrrrrrrr}
\toprule
\textbf{Model} & \textbf{Method} & \textbf{Avg.} & \rotatebox{75}{\strut AbsAlg} & \rotatebox{75}{\strut Anatomy} & \rotatebox{75}{\strut Astron} & \rotatebox{75}{\strut BusEth} & \rotatebox{75}{\strut ClinKn} & \rotatebox{75}{\strut ColBio} & \rotatebox{75}{\strut ColChem} & \rotatebox{75}{\strut ColCS} & \rotatebox{75}{\strut ColMath} & \rotatebox{75}{\strut ColMed} \\
\midrule
\multirow{7}{*}{\textbf{Qwen3-80B-A3B}} & NS-2 & 0.082 & 0.130 & 0.044 & 0.059 & 0.130 & 0.060 & 0.076 & 0.030 & 0.060 & 0.130 & 0.127 \\
 & NS-4 & 0.289 & 0.250 & 0.400 & 0.263 & 0.270 & 0.302 & 0.410 & 0.110 & 0.290 & 0.240 & 0.272 \\
 & NS-6 & 0.816 & 0.680 & 0.807 & 0.941 & 0.820 & 0.864 & 0.938 & 0.640 & 0.770 & 0.680 & 0.821 \\
 & NS-8 & 0.847 & 0.780 & 0.815 & 0.934 & 0.820 & 0.883 & 0.965 & 0.670 & 0.850 & 0.760 & 0.850 \\
 & NS-10 & 0.854 & 0.800 & 0.807 & 0.941 & 0.810 & 0.891 & 0.965 & 0.670 & 0.860 & 0.780 & 0.867 \\
 & NS-12 & 0.854 & 0.800 & 0.807 & 0.941 & 0.810 & 0.891 & 0.972 & 0.680 & 0.860 & 0.770 & 0.861 \\
 & NS-14 & 0.852 & 0.800 & 0.807 & 0.934 & 0.810 & 0.891 & 0.965 & 0.670 & 0.850 & 0.780 & 0.861 \\
\midrule
\multirow{7}{*}{\textbf{Qwen3-35B-A3B}} & NS-2 & 0.025 & 0.000 & 0.104 & 0.013 & 0.030 & 0.019 & 0.014 & 0.020 & 0.000 & 0.000 & 0.035 \\
 & NS-4 & 0.045 & 0.130 & 0.074 & 0.092 & 0.000 & 0.030 & 0.014 & 0.010 & 0.000 & 0.000 & 0.075 \\
 & NS-6 & 0.708 & 0.360 & 0.733 & 0.868 & 0.750 & 0.770 & 0.882 & 0.460 & 0.710 & 0.530 & 0.728 \\
 & NS-8 & 0.710 & 0.410 & 0.704 & 0.737 & 0.830 & 0.766 & 0.917 & 0.420 & 0.790 & 0.610 & 0.717 \\
 & NS-10 & 0.757 & 0.480 & 0.837 & 0.750 & 0.830 & 0.823 & 0.944 & 0.460 & 0.760 & 0.660 & 0.792 \\
 & NS-12 & 0.757 & 0.460 & 0.837 & 0.743 & 0.840 & 0.826 & 0.938 & 0.450 & 0.790 & 0.680 & 0.775 \\
 & NS-14 & 0.757 & 0.490 & 0.837 & 0.743 & 0.840 & 0.819 & 0.931 & 0.440 & 0.770 & 0.690 & 0.792 \\
\midrule
\multirow{7}{*}{\textbf{Qwen3.5-0.8B}} & NS-2 & 0.000 & 0.000 & 0.000 & 0.000 & 0.000 & 0.000 & 0.000 & 0.000 & 0.000 & 0.000 & 0.000 \\
 & NS-4 & 0.182 & 0.170 & 0.333 & 0.211 & 0.170 & 0.155 & 0.236 & 0.080 & 0.170 & 0.030 & 0.202 \\
 & NS-6 & 0.329 & 0.260 & 0.363 & 0.375 & 0.320 & 0.377 & 0.354 & 0.250 & 0.330 & 0.230 & 0.312 \\
 & NS-8 & 0.500 & 0.320 & 0.467 & 0.507 & 0.550 & 0.600 & 0.576 & 0.480 & 0.410 & 0.350 & 0.526 \\
 & NS-10 & 0.490 & 0.320 & 0.474 & 0.487 & 0.510 & 0.574 & 0.611 & 0.490 & 0.420 & 0.340 & 0.491 \\
 & NS-12 & 0.495 & 0.330 & 0.467 & 0.493 & 0.530 & 0.589 & 0.604 & 0.490 & 0.430 & 0.330 & 0.491 \\
 & NS-14 & 0.484 & 0.310 & 0.452 & 0.487 & 0.500 & 0.577 & 0.604 & 0.490 & 0.410 & 0.320 & 0.486 \\
\bottomrule
\end{tabular}
\end{table*}

\end{document}